\newtheorem{thm}{Theorem}
\newtheorem{lem}{Lemma}
\newtheorem{prop}{Proposition}
\theoremstyle{remark}
\newtheorem{rem}{Remark}
\newtheorem{ass}{Assumption}
\DeclareMathOperator*{\argmin}{arg\,min}
\def\BibTeX{{\rm B\kern-.05em{\sc i\kern-.025em b}\kern-.08em
    T\kern-.1667em\lower.7ex\hbox{E}\kern-.125emX}}
\begin{document}
\title{Efficient Wireless Federated Learning with Partial Model Aggregation\\
}

\author{Zhixiong~Chen,~\IEEEmembership{Student Member,~IEEE},
Wenqiang~Yi,~\IEEEmembership{Member,~IEEE},\\
Arumugam~Nallanathan,~\IEEEmembership{Fellow,~IEEE},
and Geoffrey Ye Li,~\IEEEmembership{Fellow,~IEEE}
\small
\thanks{Zhixiong Chen, Wenqiang Yi, and Arumugam Nallanathan are with the School of Electronic Engineering and Computer Science, Queen Mary University of London, London, U.K. (emails: \{zhixiong.chen, w.yi, a.nallanathan\}@qmul.ac.uk).}
\thanks{Geoffrey Ye Li is with the Faculty of Engineering, Department of Electrical and Electronic Engineering, Imperial College London, England (e-mail: geoffrey.li@imperial.ac.uk).}
\thanks{Part of this work has been accepted to IEEE International Conference on Communications (ICC), 2023 \cite{chen2022is}.}
}

\maketitle
\vspace{-2cm}
\begin{abstract}
The data heterogeneity across devices and the limited communication resources, e.g., bandwidth and energy, are two of the main bottlenecks for wireless federated learning (FL).
To tackle these challenges, we first devise a novel FL framework with partial model aggregation (PMA).
This approach aggregates the lower layers of neural networks, responsible for feature extraction, at the parameter server while keeping the upper layers, responsible for complex pattern recognition, at devices for personalization.
The proposed PMA-FL is able to address the data heterogeneity and reduce the transmitted information in wireless channels.
Then, we derive a convergence bound of the framework under a non-convex loss function setting to reveal the role of unbalanced data size in the learning performance. On this basis,  we maximize the scheduled data size to minimize the global loss function through jointly optimize the device scheduling, bandwidth allocation, computation and communication time division policies with the assistance of Lyapunov optimization.
%
%
Our analysis reveals that the optimal time division is achieved when the communication and computation parts of PMA-FL have the same power.
We also develop a bisection method to solve the optimal bandwidth allocation policy and use the set expansion algorithm to address the device scheduling policy.
Compared with the benchmark schemes, the proposed PMA-FL improves 3.13\% and 11.8\% accuracy on two typical datasets with heterogeneous data distribution settings, i.e., MINIST and CIFAR-10, respectively.
In addition, the proposed joint dynamic device scheduling and resource management approach achieve slightly higher accuracy than the considered benchmarks, but they provide a satisfactory energy and time reduction: 29\% energy or 20\% time reduction on the MNIST; and 25\% energy or 12.5\% time reduction on the CIFAR-10.
\end{abstract}
\begin{IEEEkeywords}
Device scheduling, federated Learning, Lyapunov optimization, resource management
\end{IEEEkeywords}

\section{Introduction}
Federated learning (FL) is a promising distributed learning approach for protecting data privacy.
In FL, edge devices collaboratively train a model under the orchestration of a parameter server (PS), which only requires local learning models/gradients instead of local private data \cite{10024766}.
FL operations can be divided into two parts, namely the communication part and the computation part \cite{9716792}.
For the communication part, the learning performance is constrained by the limited communication resources, e.g., bandwidth and energy.
For the computation part, the model accuracy is degraded by non-independent and identically distributed (non-IID) data samples.
More specifically, the inadequate wireless resources hinder more devices devoted to the FL training process, thus negatively affecting the convergence speed and learning accuracy \cite{yang2022federated, 9530714}.
Moreover, since the PS aggregates models learned from the different devices, the data heterogeneity presented on different devices may lead to weak generalization ability of the trained global model, even resulting in an unstable training process of FL  \cite{9460016}. Therefore, FL needs well-designed solutions to address these two challenges.
\vspace{-0.4cm}
\subsection{Related Works}
From the communication perspective, efficient resource management and device scheduling schemes can enable additional devices to participate in the FL process and thus enhancing learning performance.
To this end, existing works focus on resource optimization \cite{9461628, 9579038, 9609568, 9264742}, device selection \cite{9207871, 9237168, 9605599, 9210812}, and alternating direction method
of multipliers to reduce the communication rounds of training \cite{zhou2021communication}.
The energy-efficient workload partitioning scheme in \cite{9461628} balances the computation between the central processing unit and graphics processing unit in the FL system.
The time-sharing-based transmission scheme in \cite{9579038} can improve the communication efficiency of FL.
In \cite{9609568}, a sequential transmission scheme has been developed for global model aggregation. Based on this transmission scheme, the authors proposed a device heterogeneity-aware scheduling approach to maximize the number of scheduled data samples under energy constraints.
The work in \cite{9264742} introduced  an energy-efficient transmission and computation resource allocation approach for energy consumption minimization of FL system under a latency constraint.
The joint device scheduling and resource allocation policy in \cite{9207871} maximizes the model accuracy in latency-constrained FL.
The joint client selection and bandwidth allocation scheme in \cite{9237168} maximizes the scheduled data samples under long-term client energy constraints.
In \cite{9605599}, a gradient norm approximation method can assist the device scheduling for boosting the training performance in the over-the-air FL system.
A joint learning, wireless resource allocation, and user selection problem has been investigated in \cite{9210812} to minimize an FL loss function.
Although these works have devised different device scheduling and resource management policies to facilitate FL, the joint optimization of communication and computation in FL has been rarely explored.

From the computation perspective, the emerging personalized FL techniques are promising to tackle the data heterogeneity-related challenges, which adapt the collaboratively learned global model for individual clients. Most personalized federated learning techniques involve two steps: 1) devices train a global model in a collaborative fashion, 2) each device personalizes the global model using its private data.
Existing works toward this direction utilize various techniques to implement model personalization in the latter step,
including multi-task learning \cite{NIPS2017_6211080f},
meta-learning \cite{9681911}, 
and model regularization \cite{9187874}. 
More specifically, it has been shown in \cite{NIPS2017_6211080f} that multi-task learning is a natural choice for building personalized federated models. However, the multi-task FL heavily relies on the full participation of devices in each round.
The federated meta-learning algorithm in \cite{9681911} can improve the model accuracy of FL, which maps the meta-training to the federated training process and meta-testing to FL personalization.
A proximal term is introduced in \cite{9187874} to limit the impact of local updates, achieving convergence stability and improving model generalization.
However, the above techniques require more computation or memory resources than the conventional FL algorithms that solely train a global model, e.g., Federated Averaging (FedAvg) \cite{pmlr-v54-mcmahan17a}.

\subsection{Motivations and Contributions}
Although the resource allocation and device scheduling schemes in \cite{9461628, 9579038, 9609568, 9264742, 9207871, 9237168, 9605599, 9210812, zhou2021communication} effectively alleviate the communication burden for FL in wireless networks, they are all operated by averaging local models for global aggregation and are hard to cope with the data heterogeneity nature of FL. In addition, the personalized FL algorithms in \cite{NIPS2017_6211080f, 9681911, 9187874} require more computation or memory resources than the conventional weight averaging-based FL algorithms.
Motivated by this, this work aims to devise an efficient FL approach that simultaneously tackles data heterogeneity and communication resource limitations for FL in wireless networks.
Inspired by the success of centralized learning, different learning tasks often share the lower layers of neural networks responsible for feature extraction while the heterogeneity mainly focuses on the upper layers corresponding to complex pattern recognition\cite{bengio2013representation, lecun2015deep}.
We propose a novel FL framework that partially aggregates local model parameters of the devices in the learning process to learn a shared feature extractor, while the label predictor part are localized at devices for personalization.
This design effectively improves the learning performance of FL under heterogeneous local data distribution scenarios.
In addition, in view of the devices' limited wireless resources and energy budget, we devise a joint device scheduling, wireless bandwidth, and computation resources allocation scheme to improve the learning performance of FL in practical wireless networks.
The main contributions of this paper are summarized as follows:
\begin{itemize}
  \item To tackle the data heterogeneity across devices in the FL system, we devise a novel federated learning framework, namely partial model aggregation-FL (PMA-FL), in which devices only collaboratively train the lower layers of the neural networks while the upper layers are individually trained by each device for personalization. This design is able to reduce the data volumes in the transmission phase and improve the learning performance on heterogeneous local data distribution scenarios.

  \item To enable efficient FL in wireless networks, we minimize the global loss function while simultaneously considering devices' long-term energy budget, bandwidth limitation, and latency constraints. However, it is intractable to minimize the global loss due to its inexplicit form. To this end, we theoretically characterize the convergence bound of the considered FL system with the general non-convex loss function setting, finding a new metric, termed scheduled data sample volume, which is in an explicit form for the device scheduling policy. The minimum global loss function can be obtained by maximizing this metric.

  \item To maximize the scheduled data sample volume, we formulate a joint device scheduling, wireless bandwidth allocation, and computation-communication-time division optimization problem, which is a mixed-integer nonlinear programming problem and is challenging to solve. We first decouple the long-term stochastic problem into a deterministic one in each communication round with the assistance of the Lyapunov optimization framework. Then, we derive the optimal solution for time division policies through convex optimization techniques, develop a bisection method to address the optimal bandwidth allocation policy, and use the set expansion algorithm to achieve the device scheduling policy.

  \item Experiments show that the proposed FL algorithm achieves faster convergence speed and higher model accuracies compared with the benchmark schemes, improving 3.13\% and 11.8\% accuracy on MNIST and CIFAR-10 datasets, respectively.
      Moreover, the proposed joint device scheduling and resource management algorithm can reduce around 29\% energy budget or 20\% time budget and is able to achieve higher accuracies than the considered benchmarks on the MNIST dataset. On the CIFAR-10 dataset,  the proposed algorithm can obtain slightly higher accuracies than the benchmark schemes and reduce the 25\% energy budget or 12.5\% time budget.
\end{itemize}
\vspace{-0.6cm}
\subsection{Organization and Notations}
The rest of this paper is organized as follows: In Section \ref{sec:system_model}, we introduce the FL system and learning cost, then formulate the global loss minimization problem.
The convergence analysis and problem transformation are illustrated in Section \ref{sec:conver_ana}.
The joint device schedule, wireless bandwidth allocation, and time division algorithm are developed in \ref{sec:alg}.
Section \ref{sec:simulation} verifies the effectiveness of the proposed scheme by simulation. The conclusion is drawn in Section \ref{sec:conclusion}.
For convenience, we use ``$\buildrel \Delta \over = $'' to denote ``is defined to be equal to'', $\left|\cdot\right|$ denote the size operation of a set, $\nabla(\cdot)$ denote gradient operator, $\left\langle {\cdot,\cdot} \right\rangle $ denote inner product operator, and ``$\left\|  \cdot  \right\|$'' denote the $\ell_2$ norm throughout this paper. The main notations used in this paper are summarized in Table I.

\section{System Model}\label{sec:system_model}
After introducing the general FL system in this section, we will discuss FL with partial model aggregation (PMA), the computation cost, and the communication cost, and then formulate the problem.
\begin{figure}
  \begin{minipage}[t]{0.6\linewidth}
    \centering
    \setlength{\abovecaptionskip}{0cm}
    \includegraphics[scale=0.45]{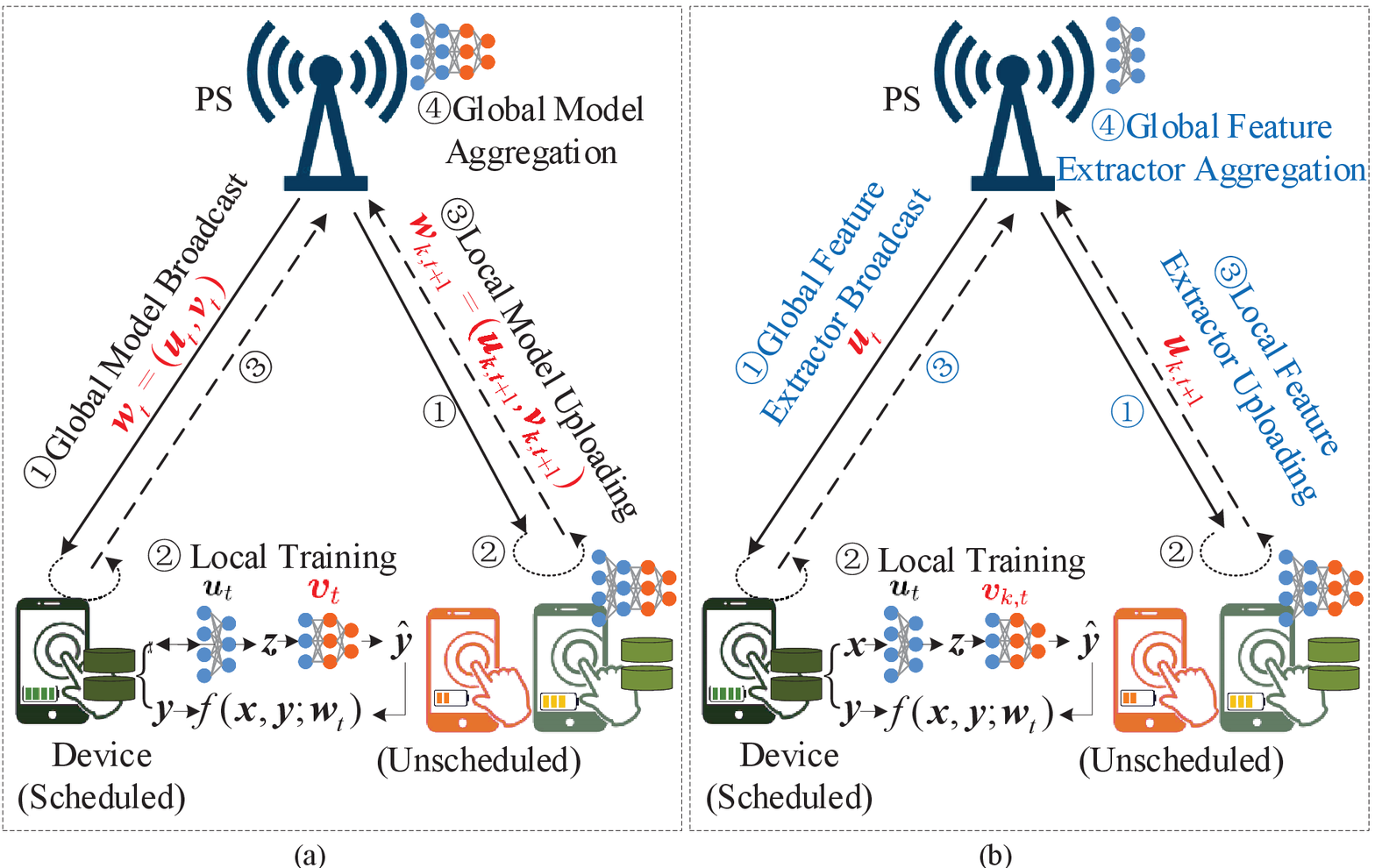}
    \caption{Illustrating the federated learning system and mechanism: (a) shows the traditional federated learning mechanism which trains a global model (including feature extractor and predictor); and (b) presents the federated learning mechanism with collaboratively train a feature extractor while the predictor is trained by each device itself for personalization.}
    \label{fig:sys_model}
  \end{minipage}
  \hspace{.1in}
  \begin{minipage}[t]{0.37\linewidth}
    \centering
    \setlength{\abovecaptionskip}{0cm}
    \includegraphics[scale=0.3]{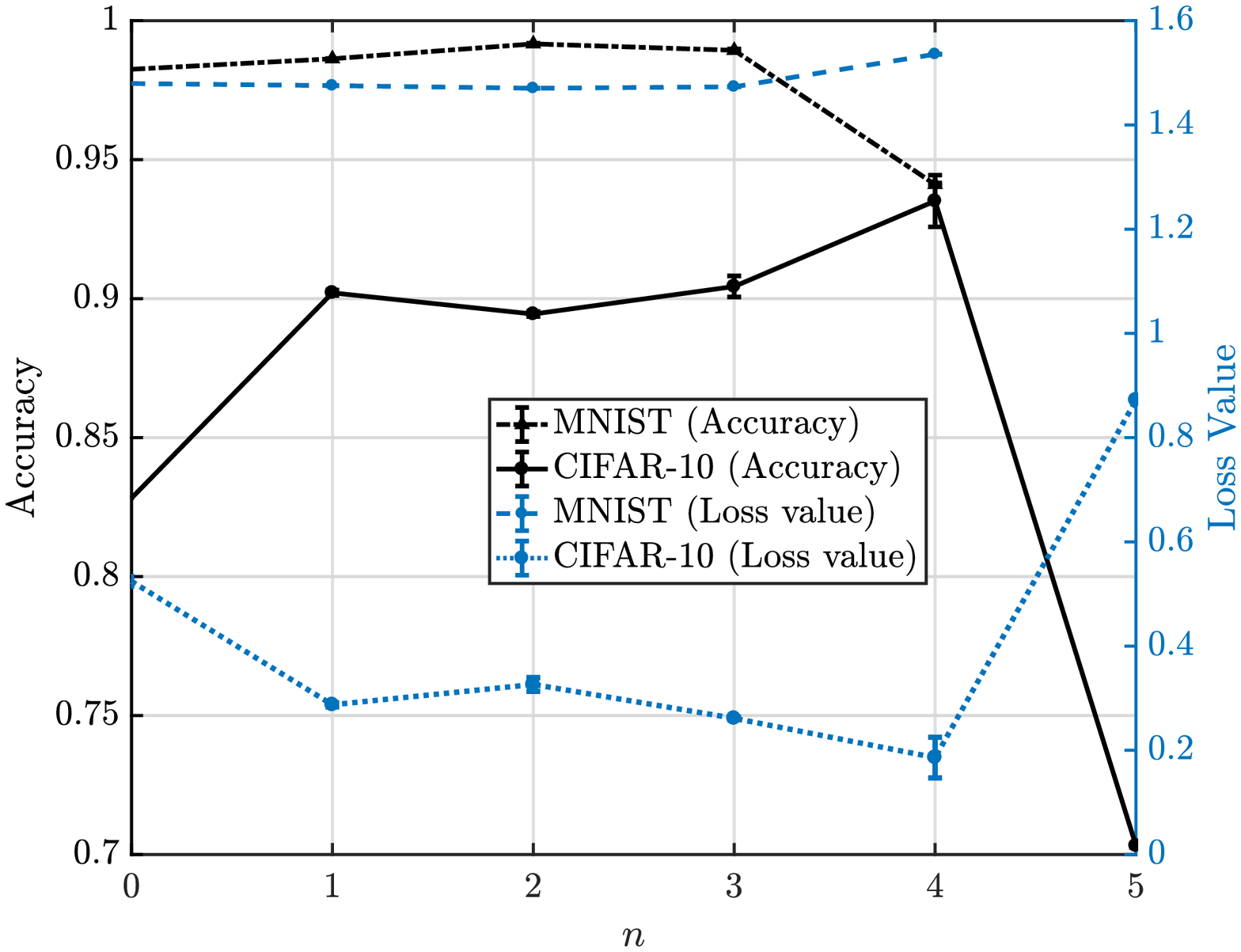}
    \caption{The test accuracy and global loss versus the number of layers of $\bm{u}_t$ (i.e., $n$): from without sharing parameters (each device solely train the model by using its own dataset) to sharing entire model parameters (FedAvg).}
    \label{fig:layerAcc}
  \end{minipage}%
  \vspace{-0.8cm}
\end{figure}



\begin{table}[ht]\tiny\label{tab:notation}\footnotesize
\caption{Notation Summary}
\begin{tabular}{p{1.8cm}|p{6.2cm}|p{1.8cm}|p{6.2cm}}
\hline
Notation & Definition & Notation & Definition \\
\hline
$\mathcal{K}$; $K$;   & Set of devices; size of $\mathcal{K}$
&$\mathcal{D}_k$; $D_k$ & Local dataset of device $k$; size of $\mathcal{D}_k$ \\
$F_k(\bm{u}_k,\bm{v}_k)$; $F(\bm{u},\bm{V})$ & Local loss function of device $k$; global loss function
&$\bm{w}_k$; $\bm{u}_k$; $\bm{v}_k$; $\bm{V}$ & Local model; local feature extractor; local predictor of device $k$; all devices' predictors \\
$\mathcal{D}$; $D$ & Overall dataset in the system; size of $\mathcal{D}$
&$\eta_u$; $\eta_v$ & Learning rate for feature extractor and predictor \\
$\bm{S}_t$; & Scheduling policy in round $t$, i.e., the set of scheduled devices
&$f_{k,t}$; $f_{k,\max}$ & CPU frequency of device $k$ in round $t$; maximum CPU frequency of device $k$\\
$p_{k,t}$; $p_{k,\max}$ & Transmit power of device $k$ in round $t$; maximum transmit power of device $k$
&$C_{k}$; $Q$ & Computation workload of one data sample at device $k$; Data size of feature extractor\\
$B$; $\bm{\theta}_t$ & Wireless transmission bandwidth;  the proportion of $B$ allocated to devices in round $t$
&$\bm{T}_t^{\rm{L}}$; $\bm{T}_t^{\text{U}}$ & Computation time and communication time for devices in round $t$\\
$E_k$ & Total energy budget of device $k$; & $T_{\max}$ & Maximum completion time for each round\\
\hline
\end{tabular}
\vspace{-0.8cm}
\end{table}

\vspace{-0.6cm}
\subsection{Federated Learning System}
In this work, we consider a typical FL setting for supervised learning, consisting of one PS and $K$ devices indexed by $\mathcal{K}= \left\{ {1,2, \cdots ,K} \right\}$, as shown in Fig. \ref{fig:sys_model}. Each device $k$ ($k \in \mathcal{K}$) has a local dataset $\mathcal{D}_k$ with $D_k =\left|\mathcal{D}_k\right|$ data samples. Without loss of generality, we assume there is no overlapping for datasets from different devices, i.e., $\mathcal{D}_k \cap \mathcal{D}_h = \emptyset ,(\forall k, h \in \mathcal{K})$. Thus, the whole dataset, $\mathcal{D} =  \cup \left\{ \mathcal{D}_k \right\}_{k = 1}^K$, is with total number of samples $D = \sum\nolimits_{k = 1}^K D_k$.

Given a data sample $(\bm{x}, \bm{y}) \in \mathcal{D}$, where $\bm{x} \in \mathbb{R}^d$ represents the input feature vector of the sample, and $\bm{y} \in \mathbb{R} $ is the corresponding ground-truth label. Let $\bm{z} \in \mathbb{R}^p $ be the latent feature space.
The machine learning model parameterized by $\bm{w}=[\bm{u}, \bm{v}]$ consists of two components: a feature extractor $\bm{x} \to \bm{z}$ parameterized by $\bm{u}$ and a predictor $\bm{z} \to \bm{\hat y}$ parameterized by $\bm{v}$.
Let $f(\bm{x},\bm{y};\bm{w})$ denotes the sample-wise loss function, which quantifies the error between the ground-truth label, $\bm{y}$, and the predicted output, $\bm{\hat y}$, based on model $\bm{w}$. Thus, the local loss function at device $k$, which measures the model error on its local dataset $\mathcal{D}_k$, is defined as
\begingroup
\setlength{\abovedisplayskip}{3pt}
\setlength{\belowdisplayskip}{3pt}
\begin{align}
F_k(\bm{w}_k) = F_k(\bm{u}_k,\bm{v}_k) \buildrel \Delta \over = \frac{1}{D_k}\sum\nolimits_{(\bm{x},\bm{y}) \in \mathcal{D}_k} {f(\bm{x}, \bm{y}; \bm{w}_k)},
\end{align}
\endgroup
where $\bm{w}_k$ denotes the model of device $k$; $\bm{u}_k$ and $\bm{v}_k$ correspond to the feature extractor and predictor, respectively.
Accordingly, the global loss function associated with all distributed local datasets is given by
\vspace{-0.8cm}
\begin{align}\label{eq:loss_fn}
F(\bm{w}_1, \cdots, \bm{w}_K) \buildrel \Delta \over = \frac{1}{D}\sum\nolimits_{k = 1}^K {D_k F_k(\bm{w}_k)}.
\end{align}
The federated learning process is done by solving the following problem
\begingroup
\setlength{\abovedisplayskip}{3pt}
\setlength{\belowdisplayskip}{3pt}
\begin{align}
\mathop {\min }\limits_{(\bm{w}_1, \cdots, \bm{w}_K)} \Big{(} F(\bm{w}_1, \cdots, \bm{w}_K) \buildrel \Delta \over = \frac{1}{D}\sum\nolimits_{k = 1}^K {D_k F_k(\bm{w}_k)} \Big{)}.
\end{align}
\endgroup
To preserve the data privacy of devices, the devices collaboratively learn $(\bm{w}_1, \cdots, \bm{w}_K)$ by only uploading local learning models $\bm{w}_k (k \in \mathcal{K})$ to the PS for periodical aggregation, instead of transmitting the raw training data.
\vspace{-0.6cm}
\subsection{Federated Learning with Partial Model Aggregation}
The main objective of the typical federated learning algorithms, such as the FedAvg \cite{pmlr-v54-mcmahan17a}, is to find an optimal shared global model $\bm{w}^* = \bm{w}_k^*$ $(\forall k \in \mathcal{K})$ that minimizes the global loss function $F(\bm{w}_1, \cdots, \bm{w}_K)$, as shown in Fig. \ref{fig:sys_model}(a).
However, the data distributions among different devices in real-world FL systems are often heterogeneous, namely statistical heterogeneity. In the presence of statistical data heterogeneity, the local optimal models may drift significantly from each other, and thus solely optimizing for the global model's accuracy leads to a poor generalization of each device.

Fortunately, the success of centralized learning in training multiple tasks or learning multiple classes simultaneously has shown that data often shares a global feature representation (i.e., $\bm{u}$), while the statistical heterogeneity across devices or tasks is mainly located at the labels' predictor (i.e., $\bm{v}$) \cite{bengio2013representation, lecun2015deep}. Thus, this work proposes PMA in the FL training process instead of aggregating the entire model, as shown in Fig. \ref{fig:sys_model}(b). Specifically, devices who participate in the FL training process only upload the parameters of feature extractor $\bm{u}$ for global aggregation and the predictor $\bm{v} $ is localized for personalization. The learning process repeats the following steps until the model converges. The combination of the steps is referred to as a global round.
\begin{itemize}
  \item \textbf{Device Selection}: The PS collects communication and computation information from each device and determines the set of scheduled devices in the current round, which is denoted by $\bm{S}_t$. Let $\alpha_{k,t} \in \{0, 1\}$ denotes the scheduling indicator of device $k$ in round $t$, where $\alpha_{k,t}\!=\!1$ indicates that device $k$ is scheduled in round $t$, $\alpha_{k,t}\!=\!0$ otherwise. Thus, we have $\bm{S}_t \!=\! \{k:\alpha_{k,t}=1, \forall k \in \mathcal{K}\}$.
  \item \textbf{Global Feature Extractor Broadcast}: In each round $t$, the PS broadcasts the current latest global feature extractor $\bm{u}_t$ to all scheduled devices.
  \item \textbf{Local Model Training}: All scheduled devices update their local models after receiving the global feature extractor $\bm{u}_t$. For device $k$, its local feature extractor in round ($t+1$) is updated as
          \vspace{-0.6cm}
          \begin{equation}\label{eq:feature_update}
          \setlength{\abovedisplayskip}{3pt}
          \setlength{\belowdisplayskip}{1pt}
          \bm{u}_{k,t + 1} = \bm{u}_{t} - \alpha_{k,t}{\eta_u}{\nabla_{\bm{u}}}F_k(\bm{u}_t,\bm{v}_{k,t}),
          \end{equation}
          and its predictor is updated by
           \vspace{-0.4cm}
          \begin{equation}\label{eq:predictor_update}
          \setlength{\abovedisplayskip}{3pt}
          \setlength{\belowdisplayskip}{3pt}
            \bm{v}_{k,t + 1} = \bm{v}_{k,t} - {\alpha_{k,t}}{\eta_v}{\nabla_{\bm{v}}}{F_k}(\bm{u}_t,\bm{v}_{k,t}),
          \end{equation}
      where $\eta_u$ and $\eta_v$ represent the learning rate of feature extractor $\bm{u}$ and predictor $\bm{v}$, respectively. For ease of presentation, we use $\bm{V}=(\bm{v}_1,\bm{v}_2, \cdots, \bm{v}_K)$ denotes all the devices' predictors throughout this paper.
  \item \textbf{Global Feature Extractor Aggregation}: After finishing the local training, all scheduled devices upload their updated local feature extractors to the PS through wireless channels for aggregation. Specifically, the PS computes the global shared feature extractor as follows:
      \vspace{-0.4cm}
      \begin{align}\label{eq:aggregation}
        \bm{u}_{t + 1} =\frac{\sum\nolimits_{k=1}^K \alpha_{k,t}D_k \bm{u}_{k,t+1}}{\sum\nolimits_{k = 1}^K \alpha_{k,t}D_k}.
      \end{align}
\end{itemize}

To better illustrate the benefits of partial sharing the model parameters, we provide an experiment on both MNIST and CIFAR-10 datasets in Fig. \ref{fig:layerAcc}, where the data distribution of each device is non-IID. Specifically, each device possesses at most two classes of data samples and participates in the training in each round. The MNIST dataset is trained by a 4-layers multi-layer perceptron (MLP) model, and the CIFAR-10 dataset is trained by a 5-layers convolutional neural network (CNN) model. The detailed configurations are shown in the experimental setting part in Section \ref{sec:simulation}.
Fig. \ref{fig:layerAcc} shows that the MLP with only sharing the first two layers ($n=2$) in the training process obtains the highest accuracy and the CNN achieves the highest accuracy by aggregating the first four layers ($n=4$). One interesting result is that for both MLP and CNN, the global trained model ($n=4$ for the MLP and $n=5$ for the CNN) is less accurate than the local models of devices ($n=0$ for both the MLP and CNN) trained by their local datasets. Thus, aggregating the feature extractor with sufficient feature extracting ability in the training process is an efficient method to obtain better performance in the non-IID data distribution scenarios, instead of aggregating the entire model or solely training models on devices' local datasets.
\vspace{-0.6cm}
\subsection{Computation Cost}
In each global round $t$, the selected devices will perform local training after receiving the global feature extractor, $\bm{u}_t$, then uploading the trained local feature extractor parameters, $\bm{u}_{k,t+1}$ ($\forall k \in \bm{S}_t$), to the PS for aggregation. Let $f_{k,t}$ denote the CPU frequency of device $k$. Employing dynamic voltage and frequency scaling techniques [11], device $k$ can control the energy consumption for computation by adjusting the CPU frequency. Denote $f_{k,\max}$ the maximum CPU frequency of device $k$.
For any given machine learning model, the number of floating-point operations (FLOPs) required to one data sample for gradient calculation can be estimated, denoted by $G$ \cite{goodfellow2016deep}. Let $\zeta_k$ denote the number of CPU cycles required to process one floating-point operation, which depends on the CPU. Thus, the computation workload of one data sample at device $k$ is represented by $C_k = \zeta_k G$.
Based on the real measurement result in \cite{miettinen2010energy}, energy consumption by devices is proportional to the square of their frequency. Thus, given the computation time restriction, $T_{k,t}^{\rm{L}}$, the most energy efficient CPU frequency is $f_{k,t} = \frac{\tau D_k C_k}{T_{k,t}^{\rm{L}}}$, where $\tau$ is the the number of local iterations.
The corresponding energy consumption of device $k$ to perform local training is
\vspace{-0.6cm}
\begingroup
\setlength{\abovedisplayskip}{1pt}
\setlength{\belowdisplayskip}{3pt}
\begin{align}\label{eq:local_energy}
E_{k,t}^{\rm{L}} = \kappa \tau{D_k}C_k f_{k,t}^2 = \frac{\kappa \tau^3 D_k^3 C_k^3}{(T_{k,t}^{\rm{L}})^2},
\end{align}
\endgroup
where $\kappa$ denotes the devices' energy coefficient that hinges on chip architecture. Since the CPU frequency of device $k$ is restricted by $f_{k,\max}$, the computation time should satisfy
\begin{equation}\label{eq:time_ltcons}
\setlength{\abovedisplayskip}{3pt}
\setlength{\belowdisplayskip}{3pt}
T_{k,t}^{\rm{L}} \ge \frac{\tau D_k C_k}{f_{k,\max}}.
\end{equation}

In the above discussion, we have ignored the global feature extractor aggregation cost, because the PS usually has strong computation capability with negligible aggregation delay.
\vspace{-0.6cm}
\subsection{Communication Cost}
The frequency-division multiple access (FDMA) technique is employed in the FL system with a total available bandwidth of $B$ Hz for devices to upload their local feature extractors $\bm{u}_{k,t}$.
Let $\theta_{k,t}$ ($0\le \theta_{k,t} \le 1$) represent the proportion of wireless channel bandwidth allocated to device $k$ in round $t$ and $p_{k,t}$ denote the uplink transmission power of device $k$ ($k \in \mathcal{K}$).
We assume that the channel gain, $h_{k,t}$, between device $k$ and the PS remains unchanged within one round but varies independently and identically over rounds. Consequently, the achievable uplink rates for device $k$ in round $t$ can be characterized by Shannon capacity, i.e., $r_{k,t} = \theta_{k,t}B\log(1 + \frac{p_{k,t} h_{k,t}}{{\theta_{k,t}B{N_0}}})$,  where $N_0$ is the power density of noise.
Denote $Q$ by the data size of feature extractor ($\bm{u}_{k,t},\forall k \in \mathcal{K}, \forall t$). Given the maximum communication time $T_{k,t}^{\rm{U}}$, the most energy efficient transmission method is $r_{k,t}=\frac{Q}{T_{k,t}^{\rm{U}}}$ \cite{7762913}. Thus, the transmit power is
\begin{equation}
\setlength{\abovedisplayskip}{1pt}
\setlength{\belowdisplayskip}{3pt}
{p_{k,t}} = \frac{\theta_{k,t} B N_0}{h_{k,t}} \Big{(}2^{\frac{Q}{\theta_{k,t} B T_{k,t}^{\rm{U}}}} - 1 \Big{)}.
\end{equation}
The corresponding energy consumption is $E_{k,t}^{\rm{U}} = {p_{k,t}}T_{k,t}^{\rm{U}}$. Thus, the total energy consumption of device $k$ in round $t$ for both computation and communication is $E_{k,t}=E_{k,t}^{\rm{L}}+E_{k,t}^{\rm{U}}$. Let $p_{k,\max}$ denote the maximum transmit power of device $k$, then $0 \le p_{k,t} \le p_{k,\max}$. Thus, the communication time for device $k$ uploading its local feature extractor should satisfy
\begin{equation}\label{eq:time_utcons}
\setlength{\abovedisplayskip}{3pt}
\setlength{\belowdisplayskip}{3pt}
T_{k,t}^{\rm{U}} \ge \frac{Q}{\theta_{k,t}B\log \Big{(} {1 + \frac{{{p_{k,\max}}{h_{k,t}}}}{{\theta_{k,t}B{N_0}}}} \Big{)}}.
\end{equation}

Similar to many existing works as in \cite{9237168, 9605599, 9579038}, we ignore the global feature extractor broadcasting cost and mainly focus on the performance bottleneck of the battery and communication-constrained edge devices because the PS usually supplied by the grid is energy-sufficient. Moreover, the broadcasting process occupies the entire bandwidth and the transmit power of the PS is usually large, the transmission delay is negligible.
\vspace{-0.6cm}
\subsection{Problem Formulation}
The objective of this work is to minimize the expected global loss $\mathbb{E}[F(\bm{u}_T, \bm{V}_T)]$ after $T$ rounds under the energy budget constraints of devices.
To this end, we jointly optimize the device scheduling, bandwidth allocation, computation time, and communication time allocation policy.
Denote $\bm{\theta}_t=(\theta_{1,t}, \theta_{2,t},\cdots, \theta_{K,t})$ as the proportions of the overall wireless bandwidth allocated to different devices in round $t$.
Let $\bm{T}_t^{\rm{L}}=(T_{1,t}^{\rm{L}}, T_{2,t}^{\rm{L}},\cdots, T_{K,t}^{\rm{L}})$ and $\bm{T}_t^{\rm{U}}=(T_{1,t}^{\rm{U}}, T_{2,t}^{\rm{U}},\cdots, T_{K,t}^{\rm{U}})$ denote the computation time and communication time for all devices in round $t$, respectively.
We formulate the problem as follows:
\begingroup
\setlength{\abovedisplayskip}{3pt}
\setlength{\belowdisplayskip}{3pt}
\begin{align}
\mathcal{P}:~~~~~~~&\min_{\left\{ \bm{S}_t,\bm{\theta}_t,\bm{T}_t^{\rm{L}},\bm{T}_t^{\text{U}},\right\}_{t = 0}^{T-1}}~ \mathbb{E}\left[F(\bm{u}_T, \bm{V}_T) \right]\label{prob:P}\\[-0.2cm]
\text{s.~t.~~}& (\text{\ref{eq:time_ltcons}}), (\text{\ref{eq:time_utcons}}),\label{cons:P_1}\tag{\theequation a}\\[-0.2cm]
& \sum\nolimits_{t = 0}^{T-1} {{E_{k,t}} \le } {E_k}, \forall k \in \mathcal{K}, \label{cons:P_2}\tag{\theequation b}\\[-0.2cm]
&\alpha_{k,t} \in \left\{ {0,1} \right\},\forall k \in \mathcal{K}, \forall t.\label{cons:P_3}\tag{\theequation c}\\[-0.2cm]
&\sum\nolimits_{k = 1}^K \theta_{k,t} \le 1, \forall t,\label{cons:P_4}\tag{\theequation d}\\[-0.2cm]
&0 \le \theta_{k,t} \le 1, \forall k \in \mathcal{K}, \forall t,\label{cons:P_5}\tag{\theequation e}\\[-0.2cm]
&T_{k,t}^{\rm{L}} + T_{k,t}^{\rm{U}} \le {T_{\max }}, \forall k \in \mathcal{K}, \forall t,\label{cons:P_6}\tag{\theequation f}
\end{align}
\endgroup
In problem $\mathcal{P}$,
(\ref{cons:P_1}) restricts the computation and communication time.
(\ref{cons:P_2}) indicates that for each device, the total energy consumption for both computation and communication over $T$ global rounds cannot exceed its given budget.
(\ref{cons:P_3}) indicates that which devices are scheduled in each round.
(\ref{cons:P_4}) assures that the wireless bandwidth resource allocated to all devices would not exceed the total available bandwidth resource.
(\ref{cons:P_5}) imposes restrictions on the wireless bandwidth resource allocated to each device.
(\ref{cons:P_6}) stipulates that the completion time for the participating devices in one round cannot exceed its maximum allowable delay $T_{\max}$.

Problem $\mathcal{P}$ involves a combinatorial optimization over the multi-dimensional discrete and continuous space, which is challenging to solve. Two major challenges of solving problem $\mathcal{P}$ are:
\begin{enumerate}[fullwidth,itemindent=1em,label=\arabic*)]
  \item \textbf{Inexplicit form of the objective function}: Since the evolutions of the feature extractor $\bm{u}_t$ and predictors $\bm{v}_{k,t}$ are complex in the training process, it is intractable to solve the close-form expression of $\mathbb{E}\left[F(\bm{u}_T, \bm{V}_T) \right]$.
  \item \textbf{Unknown future information}: The optimal solution of $\mathcal{P}$ requires exact channel state and devices' energy status information of all rounds at the beginning of training, which is impractical in real-world systems.
\end{enumerate}

To tackle these challenges, we first analyze the convergence bound of the considered PMA-FL algorithm and transform problem $\mathcal{P}$ into optimizing the convergence bound.

\section{Convergence Analysis and Problem Transformation}\label{sec:conver_ana}
In this section, we start with convergence analysis of the considered PMA-FL algorithm to find a metric, i.e., scheduled data size, which is in an explicit form with respect to the device schedule. Then, we transform problem $\mathcal{P}$ into maximizing this metric, so as to obtain the minimum global loss function when the FL converges. To address the challenge brought by the long-term energy constraint, we further transform the problem into a deterministic problem in each communication round by characterizing the Lyapunov drift-plus-penalty ratio function with the assistance of the Lyapunov optimization framework.
\vspace{-0.6cm}
\subsection{Convergence Anaysis}
We now investigate the convergence behaviour of the federated learning algorithm with partial model aggregation. To facilitate analysis, we make the following assumptions on the loss functions $F(\cdot)$.
\begin{ass}\label{assump:one}
(Lipschitz continuous): All loss function ${F_k}(\bm{u},\bm{v}_k)$ are continuously differentiable with respect to $\bm{u}$ and $\bm{v}_k$, and there exist constants $L_{\bm{u}}$, $L_{\bm{v}}$, $L_{\bm{uv}}$, and $L_{\bm{vu}}$ such that for each ${F_k}(\bm{u},\bm{v}_k)$ ($k \in \mathcal{K}$),
\begin{itemize}
\item $\nabla_{\bm{u}}{F_k}(\bm{u},{\bm{v}_k})$ is  $L_{\bm{u}}$-Lipschitz continuous with $\bm{u}$ and $L_{\bm{uv}}$-Lipschitz continuous with $\bm{v}_k$, that is,
    \begingroup
    \setlength{\abovedisplayskip}{-3pt}
    \setlength{\belowdisplayskip}{-3pt}
    \begin{align}
    \left\| {\nabla_{\bm{u}}{F_k}(\bm{u}, {\bm{v}_k}) \!-\! \nabla_{\bm{u}}{F_k}(\bm{u}', \bm{v}_k)} \right\| \le {L_{\bm{u}}}\left\|\bm{u} \!-\! \bm{u}'\right\|,
    \end{align}
    \endgroup
    and
    \vspace{-0.6cm}
    \begin{align}
    \left\| {\nabla_{\bm{u}}{F_k}(\bm{u},{\bm{v}_k}) \!-\! \nabla_{\bm{u}}{F_k}(\bm{u},\bm{v'}_k)} \right\| \!\le\! L_{\bm{uv}}\left\| \bm{v}_k \!-\! \bm{v'}_k \right\|.
    \end{align}
\item $\nabla_{\bm{v}}{F_k}(\bm{u},{\bm{v}_k})$ is  $L_{\bm{v}}$-Lipschitz continuous with $\bm{v}_k$ and $L_{\bm{vu}}$-Lipschitz continuous with $\bm{u}$.
\end{itemize}
\end{ass}

\begin{ass}\label{assump:two}
(Partial Gradient Diversity): There exist $\delta \ge 0$ and $\rho \ge 0$ such that for all $\bm{u}$ and $\bm{V}$, i.e.,
\begingroup
\setlength{\abovedisplayskip}{-5pt}
\setlength{\belowdisplayskip}{-3pt}
\begin{align}
{\left\| {\nabla_{\bm{u}}{F_k}(\bm{u}, \bm{v}_k)} \right\|^2} \le \delta^2  + \rho^2 {\left\| {\nabla_{\bm{u}}F(\bm{u}, \bm{V})} \right\|^2}.
\end{align}
\endgroup
\end{ass}

Assumption \ref{assump:one} is not stringent, which is satisfied by most deep neural networks. In fact, the convolution layer, linear layer, and some nonlinear activation functions (e.g., Sigmoid and tanh have already proved to be Lipschitz \cite{abbasnejad2018deep}. Based on \cite{abbasnejad2018deep}, a deep neural network defined by a composition of functions is a Lipschitz neural network if the functions in all layers are Lipschitz. Thus, most neural networks have the Lipschitz continuous gradients.
Assumption \ref{assump:two} is widely used in the convergence analysis in FL algorithms, e.g., \cite{9210812, sifaou2021robust}.
To begin with, we first derive a key lemma, proved in Appendix \ref{App:A1}, to assist our analysis as follows:

\begin{lem}\label{lem:lip_F}
Let Assumption $\rm{\ref{assump:one}}$ holds, we have
\vspace{-0.6cm}
\begin{multline}\label{eq:lemma1F_differ}
F\left( {\bm{u}_{t + 1},\bm{V}_{t + 1}} \right) - F(\bm{u}_t,\bm{V}_t)
\le \left\langle {\nabla_{\bm{u}}{F}(\bm{u}_t,\bm{V}_t),\bm{u}_{t + 1} \!-\! \bm{u}_t} \right\rangle  + \frac{1 + \chi}{2}{L_{\bm{u}}}{\left\| {\bm{u}_{t+1} \!-\! \bm{u}_t} \right\|^2}\\
+  \frac{1}{D}\sum\nolimits_{k = 1}^K {D_k} \Big{(} \left\langle {\nabla_{\bm{v}}{F_k}(\bm{u}_t,\bm{v}_{k,t}),\bm{v}_{k,t+1} - \bm{v}_{k,t}} \right\rangle
 + \frac{1 + \chi}{2}{L_{\bm{v}}}{\left\| {\bm{v}_{k,t+1} \!-\! \bm{v}_{k,t}} \right\|^2}\Big{)},
\end{multline}
where $\chi = \max \left\{ L_{\bm{uv}}, L_{\bm{vu}} \right\}/\sqrt{L_{\bm{u}} L_{\bm{v}}}$, which measures the relative cross-sensitivity of $\nabla_{\bm{u}}{F_k}(\bm{u},\bm{v}_k)$ with respect to $\bm{v}_k$ and $\nabla_{\bm{v}}{F_k}(\bm{u},\bm{v}_k)$ with respect to $\bm{u}$.
\end{lem}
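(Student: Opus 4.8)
The plan is to prove Lemma \ref{lem:lip_F} by writing the one-round change of the global loss as a sum over devices, decomposing each per-device change into a feature-extractor part and a predictor part, and then applying the descent-type inequality implied by Assumption \ref{assump:one} to each part separately. First I would fix a device $k$ and an intermediate point, and split the difference $F_k(\bm{u}_{t+1},\bm{v}_{k,t+1}) - F_k(\bm{u}_t,\bm{v}_{k,t})$ by inserting the mixed term $F_k(\bm{u}_{t+1},\bm{v}_{k,t})$, so that the increment becomes $\bigl(F_k(\bm{u}_{t+1},\bm{v}_{k,t}) - F_k(\bm{u}_t,\bm{v}_{k,t})\bigr) + \bigl(F_k(\bm{u}_{t+1},\bm{v}_{k,t+1}) - F_k(\bm{u}_{t+1},\bm{v}_{k,t})\bigr)$. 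On each of these I would apply the standard quadratic upper bound for a function with Lipschitz gradient, using $L_{\bm u}$-Lipschitzness of $\nabla_{\bm u}F_k(\cdot,\bm v_k)$ for the first and $L_{\bm v}$-Lipschitzness of $\nabla_{\bm v}F_k(\bm u,\cdot)$ for the second, then multiply by $D_k/D$ and sum over $k$, recalling $F(\bm u_t,\bm V_t) = \tfrac{1}{D}\sum_k D_k F_k(\bm u_t,\bm v_{k,t})$ and $\nabla_{\bm u}F(\bm u_t,\bm V_t) = \tfrac{1}{D}\sum_k D_k \nabla_{\bm u}F_k(\bm u_t,\bm v_{k,t})$.

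The issue with this naive splitting is that the first-order (inner-product) terms would be evaluated at the wrong base point: the quadratic bound on the first piece produces $\langle \nabla_{\bm u}F_k(\bm u_t,\bm v_{k,t}), \bm u_{t+1}-\bm u_t\rangle + \tfrac{L_{\bm u}}{2}\|\bm u_{t+1}-\bm u_t\|^2$, which is already in the desired shape, but the second piece gives $\langle \nabla_{\bm v}F_k(\bm u_{t+1},\bm v_{k,t}), \bm v_{k,t+1}-\bm v_{k,t}\rangle$, whereas the statement wants the gradient at $\bm u_t$. So the key step is to absorb the discrepancy $\langle \nabla_{\bm v}F_k(\bm u_{t+1},\bm v_{k,t}) - \nabla_{\bm v}F_k(\bm u_t,\bm v_{k,t}), \bm v_{k,t+1}-\bm v_{k,t}\rangle$ into the quadratic terms. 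By Cauchy--Schwarz and the $L_{\bm{vu}}$-Lipschitz continuity of $\nabla_{\bm v}F_k$ in $\bm u$, this cross term is bounded by $L_{\bm{vu}}\|\bm u_{t+1}-\bm u_t\|\,\|\bm v_{k,t+1}-\bm v_{k,t}\|$, and then Young's inequality $ab \le \tfrac{1}{2}(\lambda a^2 + \lambda^{-1} b^2)$ with the scaling $\lambda = \sqrt{L_{\bm u}/L_{\bm v}}$ (applied, e.g., after a symmetric split of the cross contribution, or handled together with the analogous term arising if one symmetrizes the decomposition) converts it into $\tfrac{L_{\bm{vu}}}{2}\bigl(\sqrt{L_{\bm u}/L_{\bm v}}\,\|\bm u_{t+1}-\bm u_t\|^2 + \sqrt{L_{\bm v}/L_{\bm u}}\,\|\bm v_{k,t+1}-\bm v_{k,t}\|^2\bigr)$. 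Writing $L_{\bm{vu}}\sqrt{L_{\bm u}/L_{\bm v}} = \chi L_{\bm u}$ and $L_{\bm{vu}}\sqrt{L_{\bm v}/L_{\bm u}} = \chi L_{\bm v}$ (using $\chi \ge L_{\bm{vu}}/\sqrt{L_{\bm u}L_{\bm v}}$) upgrades the coefficient $\tfrac{1}{2}L_{\bm u}$ on $\|\bm u_{t+1}-\bm u_t\|^2$ to $\tfrac{1+\chi}{2}L_{\bm u}$ and likewise for the predictor term, which is exactly the claimed bound.

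The main obstacle I anticipate is bookkeeping the cross terms so that the $L_{\bm{uv}}$ and $L_{\bm{vu}}$ contributions are both accounted for and the symmetric factor $\tfrac{1+\chi}{2}$ comes out cleanly rather than, say, $\tfrac{1}{2} + \text{(two separate constants)}$; this likely requires choosing the decomposition and the Young's-inequality weights carefully (possibly averaging the two orderings of the insertion, i.e., also using the split through $F_k(\bm u_t,\bm v_{k,t+1})$), so that $L_{\bm{uv}}$ enters the feature-extractor quadratic term and $L_{\bm{vu}}$ the predictor quadratic term, and both get dominated by $\chi\sqrt{L_{\bm u}L_{\bm v}} \le \chi\max\{L_{\bm u},L_{\bm v}\}$ against the appropriate $L_{\bm u}$ or $L_{\bm v}$. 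Once the cross term is tamed, the remainder is routine: sum the per-device inequalities weighted by $D_k/D$, identify the aggregated first-order term as $\langle \nabla_{\bm u}F(\bm u_t,\bm V_t), \bm u_{t+1}-\bm u_t\rangle$ using the aggregation rule \eqref{eq:aggregation} together with $\bm u_{t+1}-\bm u_t = \tfrac{1}{D}\sum_k D_k(\bm u_{k,t+1}-\bm u_t)$ on the scheduled set (noting the unscheduled devices contribute zero displacement by \eqref{eq:feature_update}), and collect the quadratic terms to arrive at \eqref{eq:lemma1F_differ}.
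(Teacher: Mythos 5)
Your proposal is correct and yields exactly the stated constants. The chain
$F_k(\bm{u}_{t+1},\bm{v}_{k,t+1})-F_k(\bm{u}_t,\bm{v}_{k,t})
=\bigl(F_k(\bm{u}_{t+1},\bm{v}_{k,t})-F_k(\bm{u}_t,\bm{v}_{k,t})\bigr)+\bigl(F_k(\bm{u}_{t+1},\bm{v}_{k,t+1})-F_k(\bm{u}_{t+1},\bm{v}_{k,t})\bigr)$,
two applications of the standard descent lemma, the base-point correction
$\langle \nabla_{\bm{v}}F_k(\bm{u}_{t+1},\bm{v}_{k,t})-\nabla_{\bm{v}}F_k(\bm{u}_t,\bm{v}_{k,t}),\bm{v}_{k,t+1}-\bm{v}_{k,t}\rangle\le L_{\bm{vu}}\|\bm{u}_{t+1}-\bm{u}_t\|\,\|\bm{v}_{k,t+1}-\bm{v}_{k,t}\|$,
and Young's inequality with weight $\sqrt{L_{\bm{u}}/L_{\bm{v}}}$ give per-device coefficients $\tfrac{(1+\chi)L_{\bm{u}}}{2}$ and $\tfrac{(1+\chi)L_{\bm{v}}}{2}$; weighting by $D_k/D$ and using $\nabla_{\bm{u}}F=\tfrac{1}{D}\sum_k D_k\nabla_{\bm{u}}F_k$ (the inner product factors out since $\bm{u}_{t+1}-\bm{u}_t$ is common to all devices) finishes the argument. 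The standard derivation of this lemma instead parametrizes the joint segment $s\mapsto(\bm{u}_t+s(\bm{u}_{t+1}-\bm{u}_t),\,\bm{v}_{k,t}+s(\bm{v}_{k,t+1}-\bm{v}_{k,t}))$ and integrates, which produces \emph{both} cross terms $\tfrac{L_{\bm{uv}}}{2}ab$ and $\tfrac{L_{\bm{vu}}}{2}ab$ before the weighted AM--GM step; your one-sided telescoping is more elementary and only ever invokes $L_{\bm{vu}}$, which is still dominated by $\chi\sqrt{L_{\bm{u}}L_{\bm{v}}}$ because of the $\max$ in the definition of $\chi$. Consequently, the symmetrization you worry about in your third paragraph (averaging the two insertion orders so that $L_{\bm{uv}}$ and $L_{\bm{vu}}$ both appear) is unnecessary: the single split already closes with the claimed factor $\tfrac{1+\chi}{2}$. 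One minor note: the identity $\bm{u}_{t+1}-\bm{u}_t=\tfrac{1}{D}\sum_k D_k(\bm{u}_{k,t+1}-\bm{u}_t)$ is not needed for this lemma (it belongs to the next step of the convergence analysis), since here the aggregated first-order term follows from linearity alone.
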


Based on Lemma \ref{lem:lip_F}, we derive the one-round global loss reduction bound in Appendix \ref{App:A2}, which is summarized in the following Lemma.
\begin{lem}\label{thm:oneround_conver}
Let Assumption $\rm{\ref{assump:one}}$ and Assumption $\rm{\ref{assump:two}}$ hold. The learning rate satisfy $ \eta_{\bm{u}}\le\frac{1}{(\chi + 1) L_{\bm{u}}}$, $\eta_v \le \frac{2}{(\chi + 1)L_{\bm{v}}}$, we have
\begingroup
\setlength{\abovedisplayskip}{-5pt}
\setlength{\belowdisplayskip}{-5pt}
\begin{multline}\label{eq:theorem1}
\mathbb{E}\left[F\left( {\bm{u}_{t + 1},\bm{V}_{t + 1}} \right) - F(\bm{u}_t,\bm{V}_t)\right]
\le \frac{1}{2}{\eta _u}\left( {\frac{4}{D^2}\Big{(}D - \sum\nolimits_{k = 1}^K \alpha_{k,t} D_k \Big{)}^2{\rho^2} - 1} \right)\mathbb{E}{\left\| {{\nabla_{\bm{u}}}F(\bm{u}_t,\bm{V}_t)} \right\|^2}\\
+ 2{\eta_u}\frac{\Big{(}D - \sum\nolimits_{k = 1}^K \alpha_{k,t} D_k\Big{)}^2 \delta^2}{D^2}.
\end{multline}
\endgroup
\end{lem}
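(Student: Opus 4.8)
The plan is to start from Lemma~\ref{lem:lip_F} and bound its right-hand side group by group: the inner-product-plus-quadratic term in $\bm{u}$, and the weighted sum of analogous terms in the $\bm{v}_k$'s. First I would dispose of the predictor terms. Substituting the predictor update~\eqref{eq:predictor_update}, i.e. $\bm{v}_{k,t+1}-\bm{v}_{k,t}=-\alpha_{k,t}\eta_v\nabla_{\bm{v}}F_k(\bm{u}_t,\bm{v}_{k,t})$, and using $\alpha_{k,t}^2=\alpha_{k,t}$ (valid since $\alpha_{k,t}\in\{0,1\}$), the $k$-th predictor group equals $-\alpha_{k,t}\eta_v\bigl(1-\tfrac{(1+\chi)L_{\bm{v}}\eta_v}{2}\bigr)\|\nabla_{\bm{v}}F_k(\bm{u}_t,\bm{v}_{k,t})\|^2$. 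The hypothesis $\eta_v\le\tfrac{2}{(\chi+1)L_{\bm{v}}}$ forces the parenthesised factor to be nonnegative, so each predictor group is $\le 0$; since the weights $D_k/D$ are nonnegative, the entire predictor sum in~\eqref{eq:lemma1F_differ} is $\le 0$ and may be discarded. This explains why the predictors vanish from~\eqref{eq:theorem1}.

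Next I would treat the feature-extractor group. Plugging the local update~\eqref{eq:feature_update} into the aggregation rule~\eqref{eq:aggregation} and again using $\alpha_{k,t}^2=\alpha_{k,t}$ gives $\bm{u}_{t+1}-\bm{u}_t=-\eta_u\tilde{\bm{g}}_t$, where $\tilde{\bm{g}}_t:=\bigl(\sum_k\alpha_{k,t}D_k\bigr)^{-1}\sum_k\alpha_{k,t}D_k\nabla_{\bm{u}}F_k(\bm{u}_t,\bm{v}_{k,t})$ is the $D_k$-weighted average of the scheduled local feature-extractor gradients. The learning-rate bound $\eta_u\le\tfrac{1}{(\chi+1)L_{\bm{u}}}$ lets me absorb $\tfrac{1+\chi}{2}L_{\bm{u}}\|\bm{u}_{t+1}-\bm{u}_t\|^2=\tfrac{1+\chi}{2}L_{\bm{u}}\eta_u^2\|\tilde{\bm{g}}_t\|^2$ into $\tfrac{\eta_u}{2}\|\tilde{\bm{g}}_t\|^2$, so the feature-extractor group is at most $-\eta_u\langle\nabla_{\bm{u}}F(\bm{u}_t,\bm{V}_t),\tilde{\bm{g}}_t\rangle+\tfrac{\eta_u}{2}\|\tilde{\bm{g}}_t\|^2$. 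Completing the square, this equals $\tfrac{\eta_u}{2}\|\tilde{\bm{g}}_t-\nabla_{\bm{u}}F(\bm{u}_t,\bm{V}_t)\|^2-\tfrac{\eta_u}{2}\|\nabla_{\bm{u}}F(\bm{u}_t,\bm{V}_t)\|^2$, which isolates the desired $-\tfrac{\eta_u}{2}\|\nabla_{\bm{u}}F\|^2$ term plus an aggregation-error term.

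The crux is bounding the aggregation-error norm $\|\tilde{\bm{g}}_t-\nabla_{\bm{u}}F(\bm{u}_t,\bm{V}_t)\|^2$ by the scheduling gap $\Delta_t:=D-\sum_k\alpha_{k,t}D_k=\sum_{k\notin\bm{S}_t}D_k$. Using $\nabla_{\bm{u}}F(\bm{u}_t,\bm{V}_t)=\tfrac1D\sum_k D_k\nabla_{\bm{u}}F_k=\tfrac{D-\Delta_t}{D}\tilde{\bm{g}}_t+\tfrac1D\sum_{k\notin\bm{S}_t}D_k\nabla_{\bm{u}}F_k$, I get $\tilde{\bm{g}}_t-\nabla_{\bm{u}}F(\bm{u}_t,\bm{V}_t)=\tfrac{\Delta_t}{D}\tilde{\bm{g}}_t-\tfrac1D\sum_{k\notin\bm{S}_t}D_k\nabla_{\bm{u}}F_k$. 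Applying $\|\bm{a}+\bm{b}\|^2\le2\|\bm{a}\|^2+2\|\bm{b}\|^2$, then Jensen's inequality for the convex map $\|\cdot\|^2$ to the convex combination $\tilde{\bm{g}}_t$ and to $\sum_{k\notin\bm{S}_t}\tfrac{D_k}{\Delta_t}\nabla_{\bm{u}}F_k$, and finally Assumption~\ref{assump:two} in the form $\|\nabla_{\bm{u}}F_k(\bm{u}_t,\bm{v}_{k,t})\|^2\le\delta^2+\rho^2\|\nabla_{\bm{u}}F(\bm{u}_t,\bm{V}_t)\|^2$, each of the two pieces is bounded by $\tfrac{\Delta_t^2}{D^2}\bigl(\delta^2+\rho^2\|\nabla_{\bm{u}}F(\bm{u}_t,\bm{V}_t)\|^2\bigr)$, hence the sum by $\tfrac{4\Delta_t^2}{D^2}\bigl(\delta^2+\rho^2\|\nabla_{\bm{u}}F(\bm{u}_t,\bm{V}_t)\|^2\bigr)$; the constant $4$ is precisely the $2$ from the vector split times the two contributions. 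Substituting back and taking expectations yields~\eqref{eq:theorem1}.

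I expect the third step to be the main obstacle: decomposing $\tilde{\bm{g}}_t-\nabla_{\bm{u}}F$ while respecting the normalization of $\tilde{\bm{g}}_t$ by the \emph{scheduled} data mass $\sum_k\alpha_{k,t}D_k$ (not $D$), and sequencing the vector-split, Jensen, and Assumption~\ref{assump:two} steps so the scheduling gap appears exactly as $(D-\sum_k\alpha_{k,t}D_k)^2$ with the correct constant. The predictor and learning-rate-absorption steps are routine once $\alpha_{k,t}^2=\alpha_{k,t}$ is used.
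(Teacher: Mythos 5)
Your proposal is correct and follows essentially the same route as the paper's proof: starting from Lemma~\ref{lem:lip_F}, discarding the nonpositive predictor terms via the condition on $\eta_v$, absorbing the quadratic term via $\eta_u\le\frac{1}{(\chi+1)L_{\bm{u}}}$, completing the square, and bounding the aggregation error $\|\tilde{\bm{g}}_t-\nabla_{\bm{u}}F\|^2$ through the scheduled/unscheduled split, Jensen's inequality, and Assumption~\ref{assump:two}, which reproduces the constant $4$ and the gap $(D-\sum_k\alpha_{k,t}D_k)^2/D^2$ exactly.
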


According to Lemma \ref{thm:oneround_conver}, one can find that the number of data samples scheduled in each round, i.e., $\sum\nolimits_{k = 1}^K {{\alpha_{k,t}}{D_k}}$, is the main contributor to the convergence rate of training.
Based on Lemma \ref{thm:oneround_conver}, we derive the convergence performance of the proposed PMA-FL algorithm after $T$ training rounds in the following theorem, proved in Appendix \ref{App:A3}.

\begin{thm}\label{lem:Tround_conver}
Let Assumption $\rm{\ref{assump:one}}$ and Assumption $\rm{\ref{assump:two}}$ hold. The learning rate satisfy $ \eta_{\bm{u}}\le\frac{1}{(\chi + 1) L_{\bm{u}}}$, $\eta_v \le \frac{2}{(\chi + 1)L_{\bm{v}}}$, the convergence bound in the $T$-th global round is given by
\vspace{-0.6cm}
\begin{multline}\label{eq:the1eq}
\mathbb{E}\left[F(\bm{u}_T, \bm{V}_T) - F(\bm{u}^*, \bm{V}^*)\right] \le \Big{(}\mathbb{E}\left[F(\bm{u}_0, \bm{V}_0) - F(\bm{u}^*, \bm{V}^*)\right]\Big{)}\mathop \prod \nolimits_{t = 0}^{T-1} {A_t}\\
+ \sum\nolimits_{t = 0}^{T-1} \frac{2\eta_u \delta^2}{D^2}\Big{(} {D - \sum\nolimits_{k = 1}^K \alpha_{k,t}D_k} \Big{)}^2 \mathop \prod \nolimits_{j = t + 1}^{T-1} {A_j},
\end{multline}
where $A_t=1 + \eta_u L_{\bm{u}}(\frac{4}{D^2}{\big{(}D - \sum\nolimits_{k = 1}^K \alpha_{k,t}D_k\big{)}^2} \rho^2 - 1 )$.
\end{thm}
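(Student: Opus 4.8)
The plan is to turn the one-round bound of Lemma~\ref{thm:oneround_conver} into a linear recursion for the expected optimality gap $\Delta_t \buildrel \Delta \over = \mathbb{E}\!\left[F(\bm{u}_t,\bm{V}_t) - F(\bm{u}^*,\bm{V}^*)\right]$ and then unroll it over the rounds $t=0,1,\dots,T-1$ for a given scheduling sequence $\{\alpha_{k,t}\}$. The only step that is not pure bookkeeping is a gradient-dominance argument that converts the term $\mathbb{E}\|\nabla_{\bm{u}}F(\bm{u}_t,\bm{V}_t)\|^2$ appearing in Lemma~\ref{thm:oneround_conver} into a multiple of $\Delta_t$.

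First I would fix a round $t$, abbreviate $d_t \buildrel \Delta \over = D - \sum_{k=1}^K \alpha_{k,t}D_k$, and rewrite Lemma~\ref{thm:oneround_conver} as $\Delta_{t+1} - \Delta_t \le \frac{\eta_u}{2}\!\left(\frac{4}{D^2}d_t^2\rho^2 - 1\right)\mathbb{E}\|\nabla_{\bm{u}}F(\bm{u}_t,\bm{V}_t)\|^2 + \frac{2\eta_u\delta^2}{D^2}d_t^2$. Then I would apply a gradient-dominance (Polyak--\L{}ojasiewicz-type) inequality $\|\nabla_{\bm{u}}F(\bm{u},\bm{V})\|^2 \ge 2L_{\bm{u}}\big(F(\bm{u},\bm{V}) - F(\bm{u}^*,\bm{V}^*)\big)$, which after taking expectations gives $\mathbb{E}\|\nabla_{\bm{u}}F(\bm{u}_t,\bm{V}_t)\|^2 \ge 2L_{\bm{u}}\Delta_t$. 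Provided the scheduled data volume in round $t$ is large enough that $\frac{4}{D^2}d_t^2\rho^2 \le 1$, the coefficient $\frac{\eta_u}{2}(\frac{4}{D^2}d_t^2\rho^2-1)$ multiplying the gradient term is non-positive, so replacing $\mathbb{E}\|\nabla_{\bm{u}}F(\bm{u}_t,\bm{V}_t)\|^2$ by the smaller quantity $2L_{\bm{u}}\Delta_t$ only relaxes the inequality. This produces the one-step recursion $\Delta_{t+1} \le A_t\Delta_t + b_t$ with $A_t = 1 + \eta_u L_{\bm{u}}(\frac{4}{D^2}d_t^2\rho^2 - 1)$ and $b_t = \frac{2\eta_u\delta^2}{D^2}d_t^2$; moreover $A_t \ge 1 - \eta_u L_{\bm{u}} \ge 0$ by the step-size assumption $\eta_u \le \frac{1}{(\chi+1)L_{\bm{u}}}$, so $A_t$ is a legitimate nonnegative multiplier.

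Next I would unroll $\Delta_{t+1}\le A_t\Delta_t + b_t$ by induction on $t$: multiplying the hypothesis for round $t$ through by $A_t\ge 0$ preserves the inequality, and adding $b_t$ and re-indexing the resulting products yields $\Delta_T \le \Delta_0\prod_{t=0}^{T-1}A_t + \sum_{t=0}^{T-1}b_t\prod_{j=t+1}^{T-1}A_j$, with the convention $\prod_{j=T}^{T-1}A_j = 1$. Substituting $\Delta_0 = \mathbb{E}[F(\bm{u}_0,\bm{V}_0) - F(\bm{u}^*,\bm{V}^*)]$ and $b_t = \frac{2\eta_u\delta^2}{D^2}d_t^2 = \frac{2\eta_u\delta^2}{D^2}(D - \sum_{k=1}^K\alpha_{k,t}D_k)^2$ then reproduces exactly \eqref{eq:the1eq}.

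I expect the delicate point to be the gradient-dominance step. One must secure the lower bound $\|\nabla_{\bm{u}}F\|^2 \ge 2L_{\bm{u}}(F - F^*)$ --- if Assumptions~\ref{assump:one}--\ref{assump:two} do not already imply it, a mild Polyak--\L{}ojasiewicz-type assumption on $F$ has to be invoked --- and one must keep track of the sign condition $\frac{4}{D^2}d_t^2\rho^2 \le 1$ that makes the gradient-term coefficient non-positive; this condition is precisely the requirement that the per-round scheduled data size $\sum_{k=1}^K\alpha_{k,t}D_k$ be sufficiently close to $D$, i.e., exactly the quantity that the resource-allocation problem in Section~\ref{sec:alg} is designed to maximize. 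Everything else --- substituting the gradient-dominance bound into Lemma~\ref{thm:oneround_conver} and iterating the scalar recursion --- is routine.
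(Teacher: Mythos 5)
Your proposal follows what is essentially the only route from Lemma~\ref{thm:oneround_conver} to the stated bound, and it is the route the paper's proof must also take: the coefficient $\eta_u L_{\bm{u}}$ in $A_t$ (rather than the $\tfrac{1}{2}\eta_u$ appearing in the one-round lemma) forces the substitution of $2L_{\bm{u}}\,\mathbb{E}\left[F(\bm{u}_t,\bm{V}_t)-F(\bm{u}^*,\bm{V}^*)\right]$ for $\mathbb{E}\left\|\nabla_{\bm{u}}F(\bm{u}_t,\bm{V}_t)\right\|^2$, after which the unrolling of the scalar recursion $\Delta_{t+1}\le A_t\Delta_t+b_t$ (with $\prod_{j=T}^{T-1}A_j=1$ and $A_t\ge 1-\eta_uL_{\bm{u}}\ge 0$) is exactly as you describe. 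Your bookkeeping is correct.

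The one step you flag as delicate is indeed a genuine gap, and it is worth being precise about why. In the regime the paper itself emphasizes ($0<\rho<\tfrac{1}{2}$, hence $\tfrac{4}{D^2}d_t^2\rho^2<1$ and $A_t<1$, the only regime in which the bound is informative), the coefficient multiplying the gradient term is negative, so the substitution requires the \emph{lower} bound $\left\|\nabla_{\bm{u}}F\right\|^2\ge 2L_{\bm{u}}\left(F-F^*\right)$. This is a Polyak--\L{}ojasiewicz inequality whose constant equals the smoothness constant $L_{\bm{u}}$; it is not implied by Assumptions~\ref{assump:one} and \ref{assump:two} --- smoothness yields only the reverse inequality $\left\|\nabla_{\bm{u}}F\right\|^2\le 2L_{\bm{u}}\left(F-F^*\right)$ --- and it is in tension with the paper's claim of a general non-convex setting. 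Even granting a PL assumption with constant $\mu$, one necessarily has $\mu\le L_{\bm{u}}$, so $A_t$ would become $1+\eta_u\mu\big(\tfrac{4}{D^2}d_t^2\rho^2-1\big)$, matching the stated theorem only in the extreme case $\mu=L_{\bm{u}}$. (In the opposite regime $\tfrac{4}{D^2}d_t^2\rho^2>1$ the coefficient is positive and the smoothness consequence suffices, but then $A_t>1$ and nothing converges.) So the gap lies in the theorem's hypotheses rather than in your argument: either an explicit gradient-dominance assumption must be added, or the convergence statement should be recast in terms of gradient norms, which Lemma~\ref{thm:oneround_conver} supports without any additional assumption.
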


From Theorem \ref{lem:Tround_conver}, we can conclude when $t$ trends to infinity with $0<\rho<\frac{1}{2}$ (i.e., $0<A_t<1$):
1) The FL training converges since $\prod \nolimits_{t = 0}^{T-1} {A_t}$ turns to 0 as $T$ increases, resulting in the first term in the right-hand side (RHS) of \eqref{eq:the1eq} converges to zero and the second term in the RHS of \eqref{eq:the1eq} approaches to be fixed.
2) A gap, i.e., the second term in the RHS of \eqref{eq:the1eq}, exists between $F(\bm{u}_T, \bm{V}_T)$ and $F(\bm{u}^*, \bm{V}^*)$.
Particularly, $A_t$ and the second term in the RHS of \eqref{eq:the1eq} affect the convergence speed and learning accuracy, respectively. A small $A_t$ induces a fast learning speed, and a small $\sum\nolimits_{t = 0}^{T-1} \frac{2\eta_u \delta^2}{D^2}(D - \sum\nolimits_{k = 1}^K \alpha_{k,t}D_k)^2 \mathop \prod \nolimits_{j = t + 1}^{T-1} {A_j}$ results in a small loss function and high learning accuracy.
Increasing $\sum\nolimits_{k = 1}^K \alpha _{k,t}D_k$ in each round helps $\prod \nolimits_{t = 0}^{T-1} {A_t}$ approach 0 faster and decreases the second term in the RHS of \eqref{eq:the1eq}.
These observations motivate us to maximize $\sum\nolimits_{k = 1}^K \alpha _{k,t}D_k$ in each round to improve the learning performance of PMA-FL.
Note that, Theorem \ref{lem:Tround_conver} reveals the impact of unbalanced data on the convergence performance of PMA-FL and builds the bridge between the scheduled data samples maximization and the global loss minimization from a theoretical perspective.

\vspace{-0.6cm}
\subsection{Problem Transformation}
Motivated by Theorem \ref{lem:Tround_conver}, we maximize the overall scheduled data size, i.e., $\sum\nolimits_{t = 0}^{T - 1} \sum\nolimits_{k = 1}^K {\alpha_{k,t}}D_k$, for the global loss function minimization. Thus, we transform problem $\mathcal{P}$ into the following one.
\begingroup
\setlength{\abovedisplayskip}{-0.08cm}
\setlength{\belowdisplayskip}{-0.15cm}
\begin{align}
\mathcal{P}_1:~\max_{\left\{\bm{S}_t,\bm{T}_t^{\rm{L}},\bm{T}_t^{\text{U}},\bm{\theta}_t \right\}_{t = 0}^{T-1}}~&\sum\nolimits_{t = 0}^{T - 1} \sum\nolimits_{k = 1}^K {\alpha_{k,t}}{D_k} \label{prob:P1}\\[-0.2cm]
\text{s.~t.~~~~}& (\text{\ref{eq:time_ltcons}}), (\text{\ref{eq:time_utcons}}),(\text{\ref{cons:P_2}}),(\text{\ref{cons:P_3}}),(\text{\ref{cons:P_4}}),(\text{\ref{cons:P_5}}),(\text{\ref{cons:P_6}}). \notag
\end{align}
\endgroup

Problem $\mathcal{P}_1$ is difficult to solve due to the long-term energy constraint and the unknown future information about channel condition for devices. To enable online dynamic scheduling for devices, we construct a virtual queue $q_k(t)$ for each device $k$ to indicate the gap between the cumulative energy consumption till round $t$ and the budget, evolving according to
\begingroup
\setlength{\abovedisplayskip}{0pt}
\setlength{\belowdisplayskip}{0pt}
\begin{align}
q_k(t + 1) = \max\left\{q_k(t) + \alpha_{k,t} E_{k,t} - \frac{E_k}{T},0 \right\},
\end{align}
\endgroup
with an initial value ${q_k}(0) = 0$ for all devices.
Inspired by the drift-plus-penalty algorithm of Lyapunov optimization \cite{neely2010stochastic}, the online scheduling aims to solve the following problem,
\begingroup
\setlength{\abovedisplayskip}{-2pt}
\setlength{\belowdisplayskip}{-3pt}
\begin{align}
\mathcal{P}_2:\min_{\left\{ \bm{S}_t,\bm{T}_t^{\rm{L}},\bm{T}_t^{\text{U}},\bm{\theta}_t \right\}}&-V\sum\nolimits_{k=1}^K {\alpha_{k,t}}{D_k} + \sum\nolimits_{k = 1}^K {q_k}(t){\alpha_{k,t}}E_{k,t} \label{prob:P2}\\[-0.2cm]
\text{s.~t.~~~~}& (\text{\ref{eq:time_ltcons}}), (\text{\ref{eq:time_utcons}}),(\text{\ref{cons:P_3}}),(\text{\ref{cons:P_4}}),(\text{\ref{cons:P_5}}),(\text{\ref{cons:P_6}}). \notag
\end{align}
\endgroup
where $V \ge 0$ is an adjustable weight parameter to balance scheduled data size and energy consumption. A large $V$ indicates that the optimization objective emphasizes more on the scheduled data size for improving the learning performance and less on energy consumption minimization, and vice versa.

\section{Energy-Efficient Dynamic Device Scheduling and Resource Management}\label{sec:alg}
In this section, we solve the deterministic combinatorial problems $\mathcal{P}_2$ in each communication round. We first exploit the dependences among $\bm{S}_t$, $\bm{\theta}_t$, $\bm{T}_t^{\rm{L}}$, and $\bm{T}_t^{\rm{U}}$ in problem $\mathcal{P}_2$ and transform it into an equivalent problem that joint optimizing $\bm{S}_t$, $\bm{\theta}_t$, and $\bm{T}_t^{\rm{L}}$. Then we decompose it into three sub-problems and deploy an alternative optimization technique to obtain its optimal solution.
For the convenience of analysis, we rewrite the local feature extractor uploading energy consumption as
\vspace{-0.5cm}
\begingroup
\setlength{\abovedisplayskip}{1pt}
\setlength{\belowdisplayskip}{1pt}
\begin{align}\label{eq:trans_energy}
E_{k,t}^{\rm{U}} = {p_{k,t}}T_{k,t}^{\rm{U}} = \frac{\theta_{k,t} B N_0 T_{k,t}^{\rm{U}}}{h_{k,t}}\Big{(} 2^{\frac{Q}{\theta_{k,t}B T_{k,t}^{\rm{U}}}} - 1 \Big{)},
\end{align}
\endgroup
which is a non-increasing function with respect to $T_{k,t}^{\rm{U}}$. Thus, by taking into account the constraint (\ref{cons:P_6}), the optimal communication time satisfies $T_{k,t}^{\rm{U}} = {T_{\max }} - T_{k,t}^{\rm{L}}$. Based on this, we can simplify problem $\mathcal{P}_2$ as the following equivalent problem,
\vspace{-0.4cm}
\begin{align}
\mathcal{P}_3:\min_{\left\{ \bm{S}_t,\bm{\theta}_t, \bm{T}_t^{\rm{L}} \right\}}&-V\sum\nolimits_{k=1}^K {\alpha_{k,t}}{D_k} + \sum\nolimits_{k = 1}^K {q_k}(t){\alpha_{k,t}}E_{k,t} \label{prob:P3}\\[-0.2cm]
\text{s.~t.~~~~}& (\text{\ref{cons:P_3}}), (\text{\ref{cons:P_4}}),(\text{\ref{cons:P_5}}), \notag\\[-0.2cm]
&\tau D_k C_k / f_{k,\max} \le T_{k,t}^{\rm{L}} \le {T_{\max}} - Q/r_{k,t}^{\max}(\theta_{k,t}),\label{cons:P3_1}\tag{\theequation a}
\end{align}
where
\vspace{-0.6cm}
\begingroup
\setlength{\abovedisplayskip}{1pt}
\setlength{\belowdisplayskip}{1pt}
\begin{align}
r_{k,t}^{\max}(\theta_{k,t}) = \theta_{k,t} B \log \Big{(}1 + \frac{p_{k,\max} h_{k,t}}{\theta_{k,t} B N_0} \Big{)}.
\end{align}
\endgroup
However, problem $\mathcal{P}_3$ is a mixed integer non-linear programming problem, which is still difficult to solve. In the below, we decompose it into three sub-problems and solve them one by one.
\vspace{-0.6cm}
\subsection{Local Training Time Allocation}
For any given device scheduling policy $\bm{S}_t$ and bandwidth allocation strategy $\bm{\theta}_t$, we can decompose the computation time allocation problem as follows,
\begingroup
\setlength{\abovedisplayskip}{-0.15cm}
\setlength{\belowdisplayskip}{-0.18cm}
\begin{align}
\mathcal{P}_4:\min_{\bm{T}_t^{\rm{L}}}~~~&~~\sum\nolimits_{k \in \bm{S}_t} q_k(t)E_{k,t} \label{prob:P4}\\[-0.2cm]
\text{s.~t.~~~~}& (\text{\ref{cons:P3_1}}). \notag
\end{align}
\endgroup
We can prove that $\mathcal{P}_4$ is convex, and obtain its optimal solution as summarized in Lemma \ref{lem:time_convex}, proved in Appendix \ref{App:A4}.
\begin{lem}\label{lem:time_convex}
Problem $\mathcal{P}_4$ is a convex problem and its optimal solution is given as
\vspace{-0.4cm}
\begin{align}\label{eq:optimaltime}
T_{k,t}^{\rm{L},*} = \left\{ {\begin{array}{*{20}{c}}
{\frac{\tau D_k C_k}{f_{k,\max}},}&{T_{k,t}^{\rm{L},0} \le \frac{\tau D_k C_k}{f_{k,\max}},}\\
{{T_{\max }} - \frac{Q}{{r_{k,t}^{\max }(\theta_{k,t})}},}&{T_{k,t}^{\rm{L},0} \ge {T_{\max }} - \frac{Q}{{r_{k,t}^{\max }(\theta_{k,t})}},}\\
{T_{k,t}^{\rm{L},0}},&{\rm{otherwise}.}
\end{array}} \right.
\end{align}
where $T_{k,t}^{\rm{L},0}$ satisfies the equality $\frac{\partial E_{k,t}}{\partial T_{k,t}^{\rm{L},0}} = 0$.
\end{lem}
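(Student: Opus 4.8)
The plan is to exploit the separable structure of $\mathcal{P}_4$, establish convexity of the resulting scalar subproblems, and then obtain the minimizer as the projection of the unconstrained stationary point onto the feasible interval.

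\textbf{Step 1 (decoupling).} With the schedule $\bm{S}_t$ and the bandwidth split $\bm{\theta}_t$ held fixed, the objective $\sum_{k \in \bm{S}_t} q_k(t) E_{k,t}$ is a sum of terms, each depending on $\bm{T}_t^{\rm{L}}$ only through its own coordinate $T_{k,t}^{\rm{L}}$, while (\ref{cons:P3_1}) is a product of box constraints $T_{k,t}^{\rm{L}} \in [L_k, U_k]$ with $L_k = \tau D_k C_k / f_{k,\max}$ and $U_k = T_{\max} - Q / r_{k,t}^{\max}(\theta_{k,t})$. Hence $\mathcal{P}_4$ decomposes into $|\bm{S}_t|$ independent one-dimensional problems $\min_{x \in [L_k, U_k]} q_k(t)\, E_{k,t}(x)$ with $x = T_{k,t}^{\rm{L}}$, and it suffices to analyze one of them.

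\textbf{Step 2 (convexity).} Next I would show that $E_{k,t}(x) = E_{k,t}^{\rm{L}}(x) + E_{k,t}^{\rm{U}}(x)$ is strictly convex on $(0, T_{\max})$. From \eqref{eq:local_energy}, $E_{k,t}^{\rm{L}}(x) = \kappa \tau^3 D_k^3 C_k^3 / x^2$, whose second derivative $6\kappa \tau^3 D_k^3 C_k^3 / x^4$ is positive. For the uplink part, substituting the optimal $T_{k,t}^{\rm{U}} = T_{\max} - x$ into \eqref{eq:trans_energy} gives $E_{k,t}^{\rm{U}}(x) = g(T_{\max} - x)$ with $g(u) = \frac{\theta_{k,t} B N_0}{h_{k,t}} u \big( 2^{Q/(\theta_{k,t} B u)} - 1 \big)$; writing $2^{s} = e^{s \ln 2}$ and differentiating twice yields $g''(u) = \frac{\theta_{k,t} B N_0}{h_{k,t}} \cdot \frac{(Q \ln 2/(\theta_{k,t} B))^2}{u^3}\, 2^{Q/(\theta_{k,t} B u)} > 0$ for all $u > 0$. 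Since $x \mapsto T_{\max} - x$ is affine, $E_{k,t}^{\rm{U}}$ is convex in $x$, so $E_{k,t}$ is strictly convex, and multiplying by $q_k(t) \ge 0$ preserves convexity. Because the feasible set $[L_k, U_k]$ is an interval, $\mathcal{P}_4$ is a convex program. I expect the computation of $g''$ to be the only mildly technical point; everything else is bookkeeping.

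\textbf{Step 3 (the minimizer).} Finally I would identify the optimal $T_{k,t}^{\rm{L},*}$. Since $E_{k,t}^{\rm{L}}(x) \to \infty$ as $x \to 0^+$ and $E_{k,t}^{\rm{U}}(x) = g(T_{\max} - x) \to \infty$ as $x \to T_{\max}^-$ (because $g(u) \to \infty$ as $u \to 0^+$), the strictly convex function $E_{k,t}$ is coercive on $(0, T_{\max})$ and therefore has a unique unconstrained minimizer $T_{k,t}^{\rm{L},0}$, characterized by the stationarity condition $\partial E_{k,t} / \partial T_{k,t}^{\rm{L}} = 0$. For a convex function on an interval, the constrained minimizer equals the projection of $T_{k,t}^{\rm{L},0}$ onto $[L_k, U_k]$: if $T_{k,t}^{\rm{L},0} \le L_k$, then $E_{k,t}$ is nondecreasing on $[L_k, U_k]$ and the optimum is $L_k$; if $T_{k,t}^{\rm{L},0} \ge U_k$, then $E_{k,t}$ is nonincreasing there and the optimum is $U_k$; otherwise the optimum is $T_{k,t}^{\rm{L},0}$ itself. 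Collecting the cases over all $k \in \bm{S}_t$ gives exactly the piecewise formula \eqref{eq:optimaltime}, which proves the lemma.
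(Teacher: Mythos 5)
Your proposal is correct and follows what is essentially the paper's own route: verify convexity of each separable term $E_{k,t}^{\rm{L}}+E_{k,t}^{\rm{U}}$ by second derivatives (the computation $g''(u)=\frac{\theta_{k,t}BN_0}{h_{k,t}}\cdot\frac{(Q\ln 2/(\theta_{k,t}B))^2}{u^3}2^{Q/(\theta_{k,t}Bu)}>0$ checks out), then obtain \eqref{eq:optimaltime} as the projection of the unconstrained stationary point onto the box $[\tau D_kC_k/f_{k,\max},\,T_{\max}-Q/r_{k,t}^{\max}(\theta_{k,t})]$. The boundary blow-up argument guaranteeing that $T_{k,t}^{\rm{L},0}$ exists and is unique in $(0,T_{\max})$ is a nice touch that the statement of the lemma implicitly relies on.
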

In fact, constraint (\ref{cons:P3_1}) imposes restrictions on the maximum frequency and transmit power and is usually inactive in practical system design because this usually can be satisfied by modifying the minimum required latency constraint, $T_{\max}$, and bandwidth $B$. Thus, we have the following remark.
\begin{rem}
In general, the optimal computation time satisfy $T_{k,t}^{\rm{L},*}= T_{k,t}^{\rm{L},0}$, which is equivalent to $\frac{\partial E_{k,t}^{\rm{L}}}{\partial T_{k,t}^{\rm{L}}} = \frac{\partial E_{k,t}^{\rm{U}}}{\partial T_k^{\rm{U}}}$. In other words, the computation time allocation policy is optimal when the power of local training equals that of wireless communication.
\end{rem}

\vspace{-0.6cm}
\subsection{Wireless Bandwidth Allocation}
For ease of presentation, we define an auxiliary function for each device $k \in \mathcal{K}$ as follows:
\begingroup
\setlength{\abovedisplayskip}{3pt}
\setlength{\belowdisplayskip}{3pt}
\begin{align}
g_k(\theta_{k,t}) = \exp \bigg{(} \frac{Q\ln 2}{\theta_{k,t}B(T_{\max} - T_{k,t}^{\rm{L}})} \bigg{)} - 1.
\end{align}
\endgroup
For any given computation time allocation decision $\bm{T}_t^{\rm{L}}$ and device scheduling policy $\bm{S}_t$, the wireless bandwidth allocation problem can be separated as,
\vspace{-0.4cm}
\begin{align}
\mathcal{P}_5:\min_{\bm{\theta}_t}&~~~~~~~h(\bm{\theta}_t) \label{prob:P5}\\[-0.2cm]
\text{s.~t.~~~~}& (\text{\ref{cons:P_4}}),(\text{\ref{cons:P_5}}),\notag\\[-0.2cm]
& Q/(T_{\max} - T_{k,t}^{\rm{L}}) \le r_{k,t}^{\max}(\theta_{k,t}), \label{cons:P5_1}\tag{\theequation a}
\end{align}
where
\vspace{-0.6cm}
\begingroup
\setlength{\abovedisplayskip}{1pt}
\setlength{\belowdisplayskip}{1pt}
\begin{align}
h(\bm{\theta}_t) = \sum\nolimits_{k \in \bm{S}_t} \theta_{k,t} \frac{{N_0 B q_k(t)(T_{\max} - T_{k,t}^{\rm{L}})}}{{{h_{k,t}}}}{g_k}(\theta_{k,t}).
\end{align}
\endgroup

Problem $\mathcal{P}_5$ is a standard convex optimization problem, its proof is similar to that for Lemma \ref{lem:time_convex} and thus omitted for brevity. Applying Karush-Kuhn-Tucker condition \cite{boyd2004convex}, the optimal solution for $\bm{\theta}_t$ satisfies
\vspace{-0.4cm}
\begingroup
\setlength{\abovedisplayskip}{1pt}
\setlength{\belowdisplayskip}{1pt}
\begin{align}\label{eq:opt_B_cond}
\frac{\partial h(\bm{\theta}_t)}{\partial \theta_{k,t}} =  - \lambda^*, \forall k \in \bm{S}_t,
\end{align}
\endgroup
where $\lambda^*$ is the optimal Lagrange multiply and $\sum\nolimits_{k \in \bm{S}_t} \theta_{k,t} = 1$. Thus, for each device $k$, we have
\vspace{-0.8cm}
\begin{align}
{g_k}(\theta_{k,t}) + \theta_{k,t} g_k'(\theta_{k,t}) = \frac{{ - \lambda^* {h_{k,t}}}}{{{q_k}(t){N_0}B(T_{\max} - T_{k,t}^{\rm{L}})}},
\end{align}
its inverse function is
\vspace{-0.6cm}
\begin{align}\label{eq:theta_op}
\theta_{k,t}(\lambda^*) \!=\! \frac{{Q\ln 2}}{B(T_{\max} \!-\! T_{k,t}^{\rm{L}})\left(\mathcal{W}(\frac{\lambda^* h_{k,t}}{q_k(t) N_0 B (T_{\max} \!-\! T_{k,t}^{\rm{L}})e} \!-\! \frac{1}{e}) \!+\! 1 \right)},
\end{align}
where $\mathcal{W}$ refers to the principal branch of the Lambert $\mathcal{W}$ function, defined as the solution for $\mathcal{W}(x)e^{\mathcal{W}(x)}=x$, in which $e$ refers to the Euler's number.

In (\ref{eq:theta_op}), there still exists an unknown variable $\lambda^*$. The value of $\lambda^*$ is determined by the equation $\sum \nolimits_{k = 1}^K \theta_{k,t}(\lambda^*) = 1$. Since the expression of $\theta_{k,t}(\lambda^*) $ is complicated, it is difficult to solve the optimal $\lambda^*$. Below we propose a bisection search method to solve $\sum \nolimits_{k = 1}^K \theta_{k,t}(\lambda^*) = 1$. To proceed, we have the following Proposition.
\begin{prop}\label{prop:increase}
$\theta_{k,t}(\lambda)$ is a monotonically decreasing function with respect to $\lambda$.
\end{prop}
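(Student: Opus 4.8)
The plan is to read off the structure of $\theta_{k,t}(\lambda)$ from \eqref{eq:theta_op} and reduce the claim to the monotonicity of the principal branch of the Lambert $\mathcal{W}$ function (which the paper has already introduced). Writing $c_k \triangleq \frac{Q\ln 2}{B(T_{\max} - T_{k,t}^{\rm{L}})}$ and $a_k \triangleq \frac{h_{k,t}}{q_k(t) N_0 B (T_{\max} - T_{k,t}^{\rm{L}})\,e}$, both of which are strictly positive (note $T_{\max} - T_{k,t}^{\rm{L}} = T_{k,t}^{\rm{U}} > 0$ by the time split and constraint (\ref{cons:P_6}), and $q_k(t) > 0$ for any device that has already contributed energy in an earlier round), the optimal allocation takes the compact form $\theta_{k,t}(\lambda) = c_k / \phi_k(\lambda)$ with $\phi_k(\lambda) \triangleq 1 + \mathcal{W}\big(a_k\lambda - \frac{1}{e}\big)$. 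Hence it suffices to show that $\phi_k$ is strictly positive and strictly increasing on the admissible range $\lambda > 0$.

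Next I would establish those two facts about $\phi_k$. The map $\lambda \mapsto a_k\lambda - 1/e$ is affine and strictly increasing, and for $\lambda > 0$ it lies strictly above $-1/e$, the left endpoint of the domain of the principal branch of $\mathcal{W}$. Since $\mathcal{W}$ is strictly increasing on $[-1/e,\infty)$ with $\mathcal{W}(-1/e) = -1$, we obtain both $\mathcal{W}(a_k\lambda - 1/e) > -1$, i.e. $\phi_k(\lambda) > 0$, and strict monotonicity of $\phi_k$. To make this quantitative I would differentiate using $\mathcal{W}'(x) = \big[(1+\mathcal{W}(x))\,e^{\mathcal{W}(x)}\big]^{-1} > 0$ for $x > -1/e$, which follows by implicit differentiation of the defining identity $\mathcal{W}(x)e^{\mathcal{W}(x)} = x$; the chain rule then gives $\phi_k'(\lambda) = a_k\big[(1+\mathcal{W}(a_k\lambda - 1/e))\,e^{\mathcal{W}(a_k\lambda - 1/e)}\big]^{-1} > 0$ for every $\lambda > 0$.

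Combining the pieces: for $\lambda > 0$ we have $\theta_{k,t}(\lambda) = c_k/\phi_k(\lambda)$ with $c_k > 0$ and $\phi_k$ positive and strictly increasing, so $\theta_{k,t}$ is strictly decreasing; equivalently $\frac{d\theta_{k,t}}{d\lambda} = -\,c_k\,\phi_k'(\lambda)/\phi_k(\lambda)^2 < 0$. The only point that needs care — and the main (though minor) obstacle — is fixing the admissible range of $\lambda$ and checking $\phi_k(\lambda) > 0$ there: one must note that the KKT multiplier obeys $\lambda^* > 0$, equivalently that the bandwidth constraint $\sum_{k\in\bm{S}_t}\theta_{k,t}\le 1$ is active (which it is whenever allocating extra bandwidth still lowers energy), so that $a_k\lambda - 1/e$ sits in the interior of the $\mathcal{W}$-domain and \eqref{eq:theta_op} is genuinely well defined. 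Away from the boundary $\lambda = 0$ the rest is the direct monotonicity argument above, and this monotonicity of $\theta_{k,t}(\lambda)$ is exactly what makes the subsequent bisection search for $\lambda^*$ in $\sum_{k}\theta_{k,t}(\lambda^*) = 1$ well posed.
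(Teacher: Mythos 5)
Your argument is correct and follows essentially the same route as the paper's own proof: since $\lambda>0$ the argument of $\mathcal{W}$ exceeds $-1/e$, and the monotonicity of the principal branch of $\mathcal{W}$ makes the denominator of \eqref{eq:theta_op} increasing, hence $\theta_{k,t}(\lambda)$ decreasing. Your additional check that $1+\mathcal{W}(\cdot)>0$ (so the expression is positive and well defined) is a small but worthwhile refinement that the paper leaves implicit.
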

\begin{proof}
Since the Lagrange multiply $\lambda >0$, we have $\frac{\lambda h_{k,t}}{e{q_k}(t)N_0 B(T_{\max} - T_{k,t}^{\rm{L}})} - \frac{1}{e} >  - \frac{1}{e}$. Moreover, $\mathcal{W}(x)$ is a monotonically increasing function when $x \ge -\frac{1}{e}$. Thus, $\theta_{k,t}(\lambda)$ is a monotonically decreasing function with respect to $\lambda$.
\end{proof}

Based on Proposition \ref{prop:increase}, the bisection search method is employed to solve the equation. In the following, we derive the bisection search upper and lower bound on $\lambda$. Since $\lambda>0$, the lower bound of $\lambda$ is $\lambda_{\text{LB}}=0$. For deriving the upper bound, we have $\max_{k \in \bm{S}_t}\{\theta_{k,t}(\lambda)\} \ge 1/{|\bm{S}_t|}$, thus
\begingroup
\setlength{\abovedisplayskip}{-3pt}
\setlength{\belowdisplayskip}{1pt}
\begin{align}
\mathcal{W}(\frac{\lambda h_{k,t}}{{q_k}(t){N_0}B(T_{\max} - T_{k,t}^{\rm{L}})e} - \frac{1}{e}) \le \frac{|\bm{S}_t| Q\ln 2}{B(T_{\max} - T_{k,t}^{\rm{L}})} - 1.
\end{align}
\endgroup
Let ${\varphi _k} = \frac{|\bm{S}_t|Q\ln 2}{B(T_{\max} - T_{k,t}^{\rm{L}})}$, from the definition of Lambert $\mathcal{W}$ function, we have
\begingroup
\setlength{\abovedisplayskip}{-2pt}
\setlength{\belowdisplayskip}{1pt}
\begin{align}\label{eq:up_lag_bound}
\lambda_{\text{UB}} = \max_{k \in \bm{S}_t} \left\{ \frac{q_k(t){N_0}B(T_{\max} \!-\! T_{k,t}^{\rm{L}})\left((\varphi_k \!-\! 1) e^{\varphi_k} \!+\! 1 \right)} {h_{k,t}} \right\}.
\end{align}
\endgroup
According to the lower bound $\lambda_{\text{LB}}$ and upper bound $\lambda_{\text{UB}}$, the optimal Lagrange multiply, $\lambda^*$, can be solved by using the bisection search method. Furthermore, the optimal wireless bandwidth allocation policy $\bm{\theta}_t$ can be derived from (\ref{eq:theta_op}).
Based on the above analysis, we have the following remark.
\begin{rem}
From (\ref{eq:opt_B_cond}),
when the bandwidth allocation policy is optimal, all devices' energy consumption-bandwidth rates (i.e., $\frac{\partial h(\bm{\theta}_t)}{\partial \theta_{k,t}}$) are equal. This actual achieves the energy consumption balance between devices.
Moreover, similar to the proof of Proposition \ref{prop:increase}, it can be proved that the optimal bandwidth form in (\ref{eq:theta_op}) is monotonically decreasing with $h_{k,t}$ and increasing with $q_k(t)$. Thus, more bandwidth should be allocated to the devices with weaker channels (smaller $h_{k,t}$) and less remaining energy budgets (larger $q_k(t)$).
\end{rem}
\vspace{-0.6cm}
\subsection{Device Scheduling Policy}
Until now, for any given $\bm{S}_t$, the computation time allocation or wireless bandwidth allocation policies can be solved if one of them is fixed. Below we solve the joint computation time and wireless bandwidth allocation policy.
For clarity, we formulate the joint computation time allocation and bandwidth allocation problem under given device scheduling decision $\bm{S}_t$ as follows:
\begingroup
\setlength{\abovedisplayskip}{-0.2cm}
\setlength{\belowdisplayskip}{-0.15cm}
\begin{align}
\mathcal{P}_6:\min_{\left\{\bm{\theta}_t, \bm{T}_t^{\rm{L}} \right\}}& \sum\nolimits_{k \in \bm{S}_t} {q_k}(t)E_{k,t} \label{prob:P6}\\[-0.2cm]
\text{s.~t.~~~~}& (\text{\ref{cons:P_4}}),(\text{\ref{cons:P_5}}), (\text{\ref{cons:P3_1}}), \notag
\end{align}
\endgroup
which is a combination problem of $\mathcal{P}_4$ and $\mathcal{P}_5$. Building on the preceding results, the computation time allocation problem, $\mathcal{P}_4$, and the bandwidth allocation problem, $\mathcal{P}_5$, are both convex optimization problems, problem $\mathcal{P}_6$ is also a convex optimization problem.
Thus, we solve the joint computation time and wireless bandwidth allocation policies via iterations \cite{waldspurger2015phase} between problem $\mathcal{P}_4$ and problem $\mathcal{P}_5$.
Each iteration consists of two steps: (1) solving the optimal solution of problem $\mathcal{P}_5$ for given $\bm{T}_{t}^{\rm{L}}$; (2) solving the computation time allocation policy $\bm{T}_{t}^{\rm{L}}$  based on the obtained bandwidth allocation solution $\bm{\theta}_t$. The two steps are iterated until convergence.
For clarity, we summarize the detailed steps on joint optimization of computation time and wireless bandwidth in Algorithm 1. Based on the complexity analysis results in \cite{waldspurger2015phase}, the time complexity of Algorithm 1 is $\mathcal{O}(2K^{3.5})$.
\begin{algorithm}
\algsetup{linenosize=} \small
\caption{Computation time and Bandwidth Allocation}
\begin{spacing}{1.3}
\begin{algorithmic}[1]
\STATE Initialize $\bm{S}_t$, the computation time as $\widetilde{T}_t^{\rm{L}}$, and bandwith allocation policy $\bm{\theta}_t$, the tolerant error $\Upsilon >0$
\STATE Calculate the objective function value (\ref{prob:P2}), denote as $\mathcal{B}_0$
\REPEAT
    \STATE Calculate the upper bound of the Lagrange multiply $\lambda_{\text{UB}}$ based on (\ref{eq:up_lag_bound}), and let $\lambda_{\text{LB}}=0$
    \STATE Utilize the bisection search method to solve the optimal bandwidth allocation policy $\bm{\theta}_t$
    \STATE Solve the computation time allocation policy based on the obtained $\bm{\theta}_t$ by using (\ref{eq:optimaltime}), update $\widetilde{T}_t^{\rm{L}}$
    \STATE Calculate the objective function value (\ref{prob:P6}) of $\bm{S}_t$ by substituting the obtained $\widetilde{T}_t^{\rm{L}}$ and $\bm{\theta}_t$, denote as $\mathcal{B}_1$
    \STATE $\Delta = \mathcal{B}_0- \mathcal{B}_1$, update $\mathcal{B}_0=\mathcal{B}_1$
\UNTIL{$\Delta \le \Upsilon$}
\RETURN The computation time allocation policy $\widetilde{T}_t^{\rm{L}}$ and bandwith allocation policy $\bm{\theta}_t$
\end{algorithmic}
\end{spacing}
\end{algorithm}

Through the above analysis, we can solve the optimal value of the objective function in (\ref{prob:P3}) for any given device scheduling decision $\bm{S}_t$. An intuitive method to solve the optimal device scheduling solution is to solve the objective function value of all the possible device scheduling decisions first and then select the one with the minimum objective function value. However, this method has exponential time complexity $\mathcal{O}( K^{3.5} \times 2^{K+1})$ since there are total $\sum_{n=0}^K C_K^n = 2^K$ possible device scheduling decisions. To tackle this challenge, we have the following designs.

According to the objective function (\ref{prob:P3}), it is desirable to select devices with small $q_k(t)$ and $E_{k,t}$. The small $E_{k,t}$ can be achieved by strong channels or/and high computation efficiencies. To identify such devices, we first perform equal bandwidth allocation over all devices and then evaluate the resulting energy consumption of each device $\bar{E}_{k,t}$.
Specifically, each device $k$ is allocated the same portion, $\theta_{k,t}=\frac{1}{K}$, of the total bandwidth $B$, and then solve problem $\mathcal{P}_4$ to obtain the computation time allocation policy $\bm{T}_t^{\rm{L}}$. Then, by substituting $\theta_{k,t}=\frac{1}{K}$ and $\bm{T}_t^{\rm{L}}$ into the (\ref{eq:local_energy}) and (\ref{eq:trans_energy}), the estimated energy consumption is calculated as $\bar{E}_{k,t}=E_{k,t}^{\rm{U}}+ E_{k,t}^{\rm{L}}$.

Based on the evaluated energy consumption $\bar{E}_{k,t}$, we sort $\mathcal{C}_{k,t} = q_k (t) \bar{E}_{k,t}$ in the ascending order, and then use the set expansion algorithm \cite{9237168} to solve the device selection policy by incrementally adds devices into the selection set, $S$. Firstly, the devices with $q_k (t)=0$ are all added into $S$, denote this device set by $S_0$. Next, the devices with $q_k (t)>0$ are added into $S$ one by one in the ascending order of $\mathcal{C}_{k,t}$.
For each possible device scheduling set $S$, we perform Algorithm 1 to obtain the computation time and wireless bandwidth allocation decisions. Let $\mathcal{R}^*(S)=(\theta^*(S), T^*(S))$  denote the time and wireless bandwidth decision and $\mathcal{Y}(S)$ represent the corresponding objective function value of $S$, respectively. Denote $\mathcal{H}$ as the set of all possible device scheduling set $S$.

Note that, $\mathcal{Y}(S_0)=-V\sum\nolimits_{k \in S_0} D_k$ due to $q_k (t)=0$ ($\forall k \in S_0$). Since the energy consumption of users in $S_0$ does not affect the objective function value, the minimum required bandwidth should be allocated to them for saving more bandwidth resources for other users in $(S-S_0)$.
Moreover, we add the users with $q_k (t)>0$ one by one into $S$ and solve the $\mathcal{R}^*(S)$ and $\mathcal{Y}(S)$. For $S$, if its optimal computation time and wireless bandwidth allocation policy results in $-V D_k + q_k(t) E_{k,t}>0$ for the last added device $k$, we stop adding devices into $S$ and remove the last added device. Then, we obtain the device scheduling policy through comparing the objective function value of all $S \in \mathcal{H}$, i.e., $S_t^* = \argmin_{S \in \mathcal{H}} \mathcal{Y}(S)$. The computation time and optimal bandwidth allocation policy correspond to $T_t^*(S)$ and $\theta_t^*(S)$. For clarity, we summarize the detail steps of device scheduling in Algorithm 2, which obtains the device scheduling solution of problem $\mathcal{P}_1$ by solving at most $K$ times convex problem $\mathcal{P}_6$ and has polynomial time complexity $\mathcal{O}(2K^{4.5})$ , which is smaller than $O(K^{3.5}\times 2^{K+1})$ when $K > 1$.
\begin{algorithm}
\algsetup{linenosize=} \small
\caption{Device scheduling}
\begin{spacing}{1.3}
\begin{algorithmic}[1]
\STATE Input the virtual queue length $q_k(t)$ ($k \in \mathcal{K}$), initialize $V$
\STATE Sort $\mathcal{C}_{k,t}$ in ascending order.
\STATE Set $S_0=\{k:q_k(t)=0\}$, $S=S_0$ and $\mathcal{H}=\{S_0\}$
\FOR{$k=|S_0|+1, \cdots, K$}
    \STATE Update $S=S \cup \{k\}$
    \STATE Solve the optimal computation time and bandwidth allocation policy by Algorithm 1, i.e., $\mathcal{R}(S)=(\bm{T}_t^{\rm{L}}, \bm{\theta}_t)$.
    \IF{$-VD_k + q_k(t) E_{k,t}>0$}
        \STATE Break the circulation
    \ELSE
        \STATE Add $S$ into $\mathcal{H}$, i.e., $\mathcal{H}=\mathcal{H} \cup S$
    \ENDIF
\ENDFOR
\STATE Find the optimal device scheduling set $\bm{S}_t^* = \argmin_{S \in \mathcal{H}}\mathcal{Y}(S)$
\RETURN The optimal device scheduling set $\bm{S}_t^*$, computation time $\bm{T}_t^{\rm{L}}$ and wireless bandwidth allocation $\bm{\theta}_t$
\end{algorithmic}
\end{spacing}
\end{algorithm}

\vspace{-0.6cm}
\section{Numerical Results}\label{sec:simulation}
In this section, we evaluate the performance of the proposed energy-efficient dynamic device scheduling FL algorithm. In the simulation, all the codes are implemented in python 3.8 and Pytorch, running on a Linux server. We first present the evaluation setup and then show experimental results.
\vspace{-0.6cm}
\subsection{Experimental Setting}
The default experiment settings are given as follows unless specified otherwise.
\begin{enumerate}[fullwidth,itemindent=1em,label=\arabic*)]
\item Datasets and Models: We evaluate the proposed algorithm for an image classification task using MNIST and CIFAR-10 datasets.
    The MNIST dataset consists of 60,000 and 10,000 grey-valued digital images for training and test, respectively. Each image is a handwritten digital between 0 and 9 displayed as a 28$\times$28 pixel matrix.
    The CIFAR-10 dataset consists of 60000 32$\times$32 colour images in 10 classes, with 50000 training images and 10000 test images. For both MNIST and CIFAR-10, we first classify the training data samples according to their labels, then randomly split each class of data samples into $ 2K/10$ shards, finally randomly distribute two shards of data samples to each device.
    For the MNIST dataset, we train a MLP, which consists of 4 layers with 550346 parameters in total. The first four layers have 784, 512, 256, and 64 units, respectively. Each of these layers is activated by the ReLU function. The last layer is a 10-unit softmax output layer. For the MLP, the number of FLOPs required to one data sample for gradient calculation is equal to its parameters' number. In our proposed FL approach, devices only share parameters of the first 2 layers, which has 533504 parameters, accounting for 96.7\% of the entire model parameters.
    For the CIFAR-10 dataset, we train a CNN with the following structure: two $5\times5$ convolution layers each with 64 channels and followed by a $2\times2$ max-pooling layer; three fully connected layers with 1600, 120, and 64 units, respectively; and a 10-unit softmax output layer. Each convolution or fully connected layer is activated by the ReLU function. The CNN possesses 307842 parameters and our proposed FL approach only share the first 4 layers in the training process, which has 99.7\% of the total number of model parameters.
    For both MLP and CNN, the learning rate $\eta_u$ and $\eta_v$ are set to 0.05, a momentum of 0.9 is adopted, the number of local iterations is set to 5, each parameter is quantitated as 16 bits, and cross entropy is adopted as the loss function.

\item System setting: If not specified, the system parameters related to communication and computation are set as follows.
      We consider that $K=100$ devices are randomly distributed within a 500m $\times$ 500m single cell with total bandwidth $B=10$ MHz, and the PS is located in the cell's centre. The channel noise power spectral density $N_0$ is set to $-174$ dBm. For all devices in the system, we set their maximum transmit power and CPU frequency as $f_{k,\max}=1$GHz and $p_{k,\max}=1$W, respectively.
       Similar to \cite{miettinen2010energy, 9862981}, we set the energy coefficient $\kappa=5\times 10^{-27}$. The channel gain is modeled as $h_{k,t} = h_0{\rho_k}(t)(d_0/d_k)^v$, where $h_0=-30$dB is the path loss constant; $d_k$ is the distance between device $k$ and the PS; $d_0=1$m is the reference distance; $\rho_k(t)\sim \text{Exp}(1)$ is exponentially distributed with unit mean, which represents the small-scale fading channel power gain from the device $k$ to the PS in round $t$; $d_0/d_k$ represents the large-scale path loss with $v=2$ being the path loss exponent. Besides, we set $T_{\max}=2$s and $\bar{E}_k=0.1$J for the MNIST dataset, and $T_{\max}=14$s and $\bar{E}_k=2$J for the CIFAR-10 dataset.
\end{enumerate}

\begin{figure*}
\centering
\subfigure[]{\label{fig:mnistacc}
\includegraphics[width=0.43\linewidth]{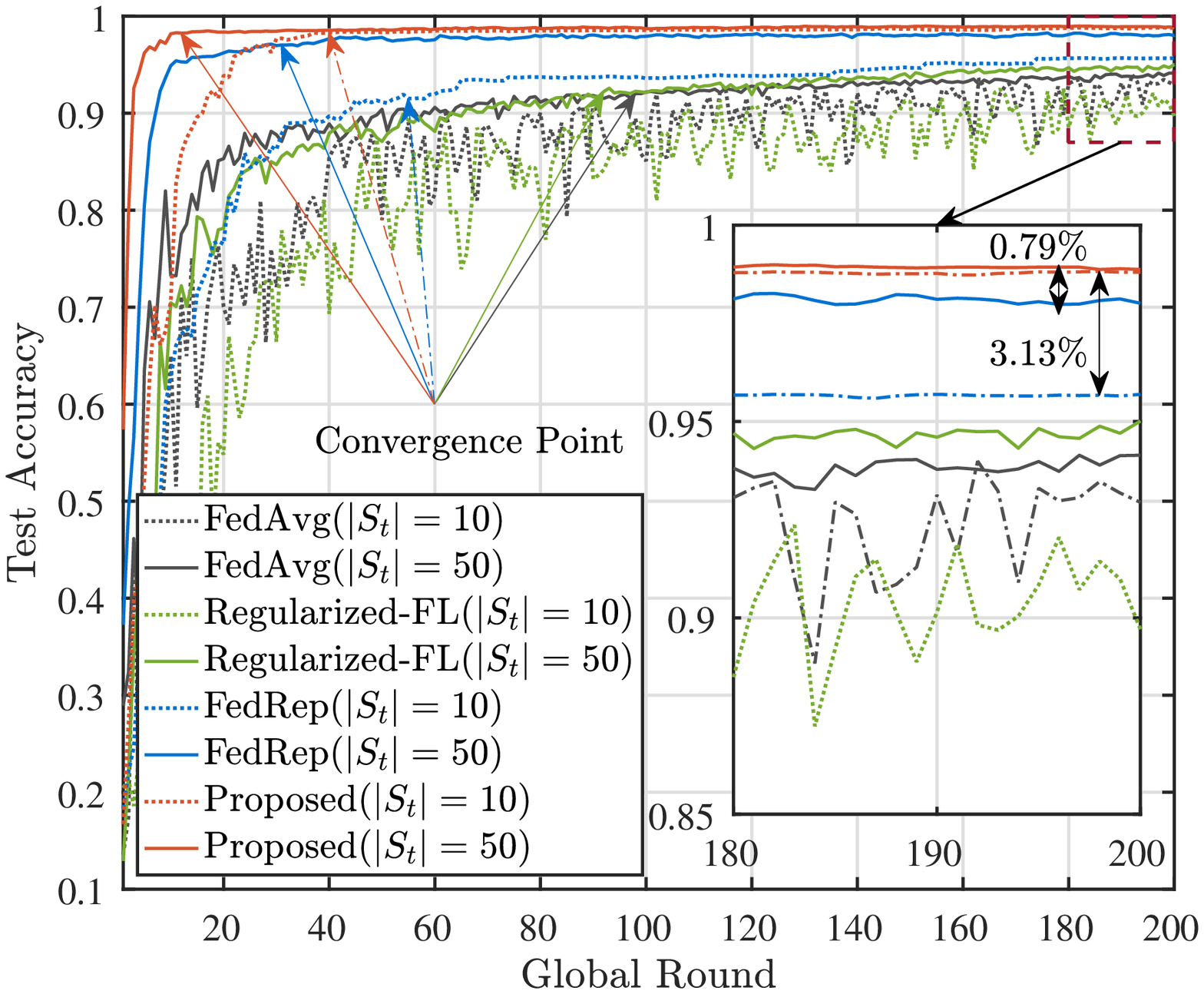}}
\hspace{0.01\linewidth}
\subfigure[]{\label{fig:mnistloss}
\includegraphics[width=0.43\linewidth]{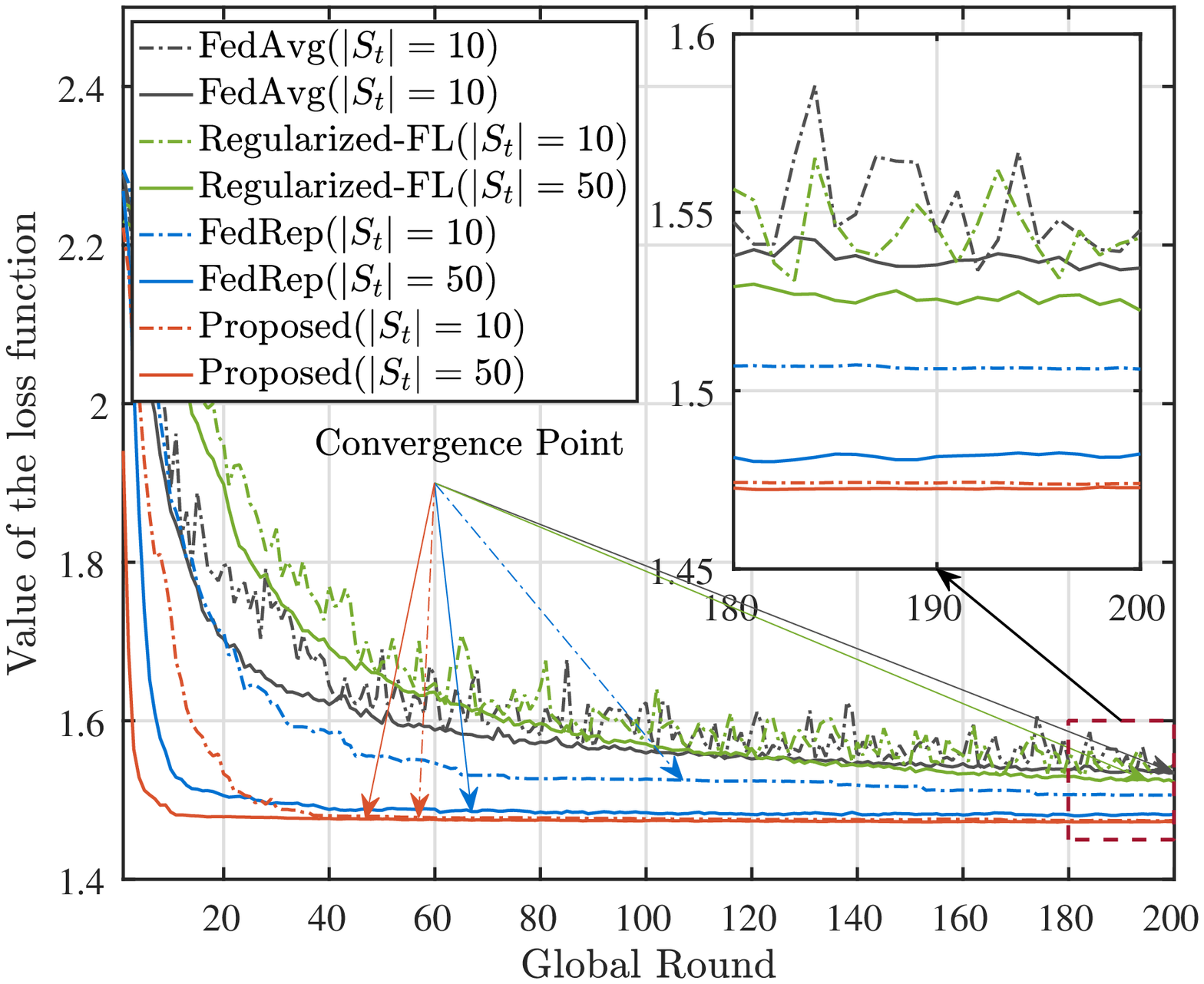}}
\vspace{-0.4cm}
\caption{Learning performance of the proposed partial aggregation approach and benchmarks on MNIST dataset: (a) test accuracy; (b) loss value.}
\label{fig:mnist}
\vspace{-0.8cm}
\end{figure*}
\vspace{-0.6cm}
\subsection{Performance of Partial Model Parameters Aggregation}
\begin{figure*}
\centering
\subfigure[]{\label{fig:cifar10acc}
\includegraphics[width=0.43\linewidth]{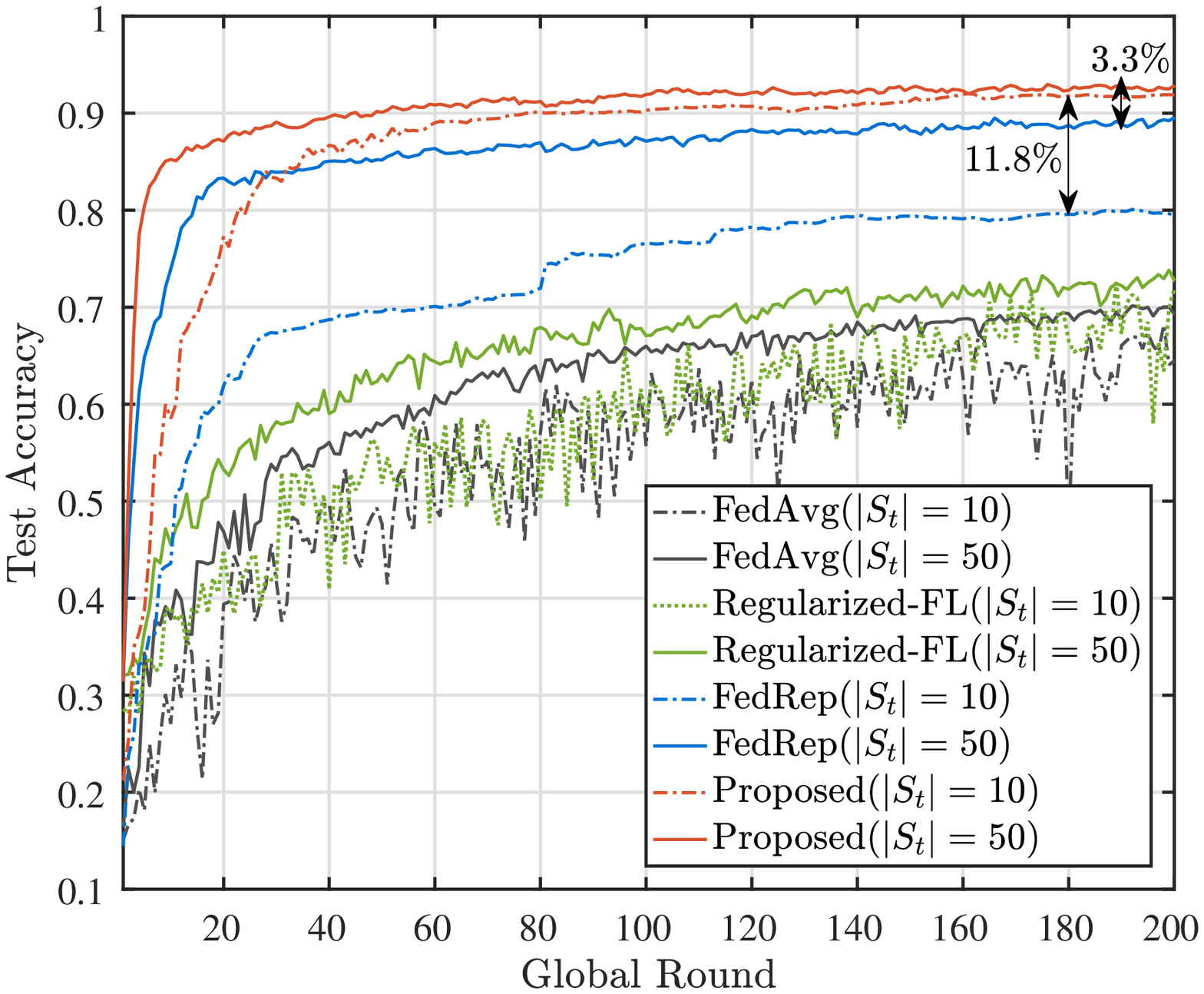}}
\hspace{0.01\linewidth}
\subfigure[]{\label{fig:cifar10loss}
\includegraphics[width=0.43\linewidth]{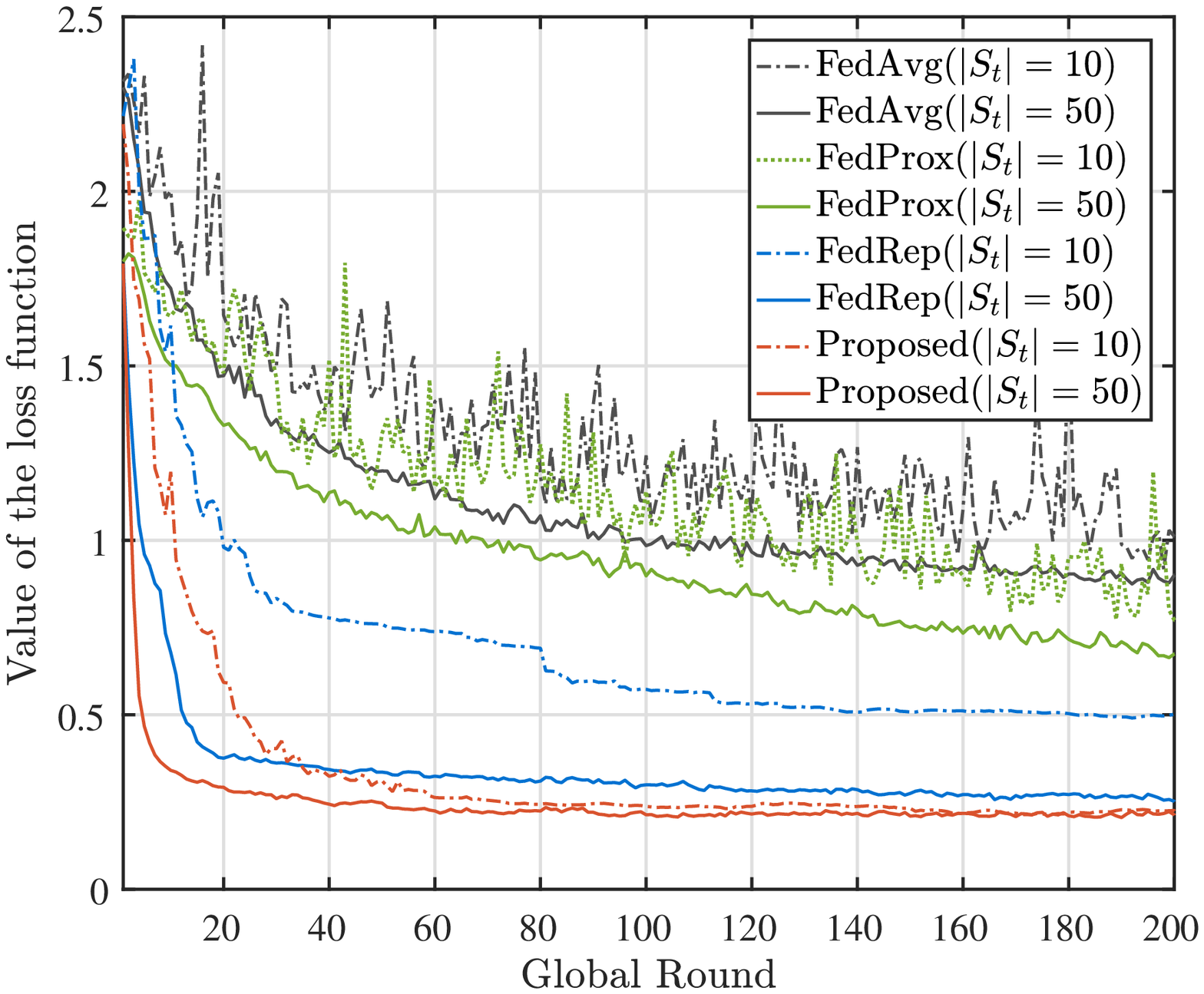}}
\vspace{-0.4cm}
\caption{Learning performance of the proposed partial aggregation approach and benchmarks on CIFAR-10 dataset: (a) test accuracy; (b) loss value.}
\label{fig:cifar10}
\vspace{-0.8cm}
\end{figure*}

To verify the advantages of the proposed PMA-FL algorithm, we compare its performance with three benchmarks.
1) \emph{Regularized FL} \cite{9187874}: Regularized FL uses a proximal term to regularize each local loss function for tackling the data heterogeneity.
2) \emph{FedAvg} \cite{pmlr-v54-mcmahan17a}: The selected devices upload the entire model to the PS for aggregation in each round.
3) \emph{FedRep} \cite{pmlr-v139-collins21a}: In each round, the selected devices sequentially train the feature extractor and predictor. After local training, the selected devices upload their feature extractors for aggregation.
Actually, Regularized FL and FedAvg requires more computation and bandwidth resources than the proposed approach. Note that, we do not consider the energy and bandwidth limitation in this subsection.

Fig. \ref{fig:mnist} compares the performance of the proposed approach with two benchmarks on the MNIST dataset. It is observed that the proposed FL approach outperforms the benchmarks in terms of test accuracy and test loss. Specifically, the proposed approach boosts 3.13\% when $\left| \bm{S}_t \right|=10$ and 0.79\% accuracy when $\left| \bm{S}_t \right|=50$ compared with the benchmark approaches.
Moreover, the proposed approach converges faster than the benchmarks. Note that the convergence point in \ref{fig:mnistacc} and \ref{fig:mnistloss} are defined as the first point that the variation of test accuracy and loss value is less than $10^{-6}$, respectively.
Additionally, compared with the three benchmarks, the proposed approach is less sensitive to the fraction of participating devices in each round. After 40 global rounds, the proposed approach with 10 devices participating in each round can obtain a similar performance as 50 devices participating in each round. The device participating ratio only affects the convergence speed and almost without reducing the final accuracy. However, the benchmarks are sensitive for the fraction of participating devices in each round, especially the training processes of Regularized FL and FedAvg are unstable when the participating ratio of devices is small, like 10 devices.

Fig. \ref{fig:cifar10} presents the performance of the proposed approach and two benchmarks on the CIFAR-10 datasets, drawing a similar conclusion with the experiments on the MNIST dataset. In particular, the proposed approach obtained a more distinct performance improvement on this more complicate dataset, boosting 11.8\% and 3.3\% accuracy than the benchmark schemes when $\left| \bm{S}_t \right|=10$ and $\left| \bm{S}_t \right|=50$, respectively. Similarly, the learning processes of Regularized FL and FedAvg is unstable when a small fraction of devices participate in each round, i.e., $\left| \bm{S}_t \right|=10$. These results indicate that the proposed approach is more robust, performing well in real datasets.

\subsection{Performance of the Proposed Energy-Efficient Device Scheduling Algorithm}
In this subsection, we verify the effectiveness of the proposed dynamic device scheduling algorithm by comparing it with the following device scheduling schemes. For fairness, we use these benchmark schemes to schedule devices for the proposed FL approach instead of their original FedAvg approach. Each curve is averaged over 100 and 50 runs for MNIST and CIFAR-10, respectively.
\begin{enumerate}[fullwidth,itemindent=1em,label=\arabic*)]
  \item \emph{Random scheduling without energy limitation (RS-WEL)}: Devices do not have energy limitation while the bandwidth and delay constraints exist. In each round, RS-WEL uses the set expansion algorithm to schedule devices. Specifically, it incrementally adds devices (randomly selected from all devices without replacement) into the scheduling set until violating the bandwidth constraint. Then, the last scheduling set that satisfies bandwidth constraints is the true scheduling device set.
  \item \emph{OCEAN}\cite{9237168}: The OCEAN is also a Lyapunov optimization-based device scheduling approach, in which the spectral bandwidth is orthogonally allocated to the scheduled devices for global aggregation in each communication round.
\end{enumerate}

Based on the MNIST dataset, Fig. \ref{fig:mnist_Ek} shows the effect of devices' energy budget on the training performance of the proposed dynamic device scheduling algorithm and two benchmarks. The results indicate that our proposed dynamic device scheduling algorithm outperforms the two benchmarks. Given the same energy budget, i.e., $\bar{E}_k=0.14$J, the proposed algorithm achieves 3.28\% test accuracy improvement comparing with the OCEAN algorithm. Moreover, the proposed algorithm is able to obtain better performance than the OCEAN algorithm under less energy budget. Specifically, the proposed algorithm with energy budget $\bar{E}_k=0.1$J (71\% of the energy budget of OCEAN) remains improving 2.59\% accuracy compared to the OCEAN algorithm with energy budget $\bar{E}_k=0.14$J.
Compared with the RS-WEL scheme with unlimited energy budget, the proposed algorithm achieves a slight accuracy improvement when the energy budget is $\bar{E}_k=0.14$J.
\begin{figure*}
\centering
\subfigure[]{\label{fig:mnist_Ek}
\includegraphics[width=0.43\linewidth]{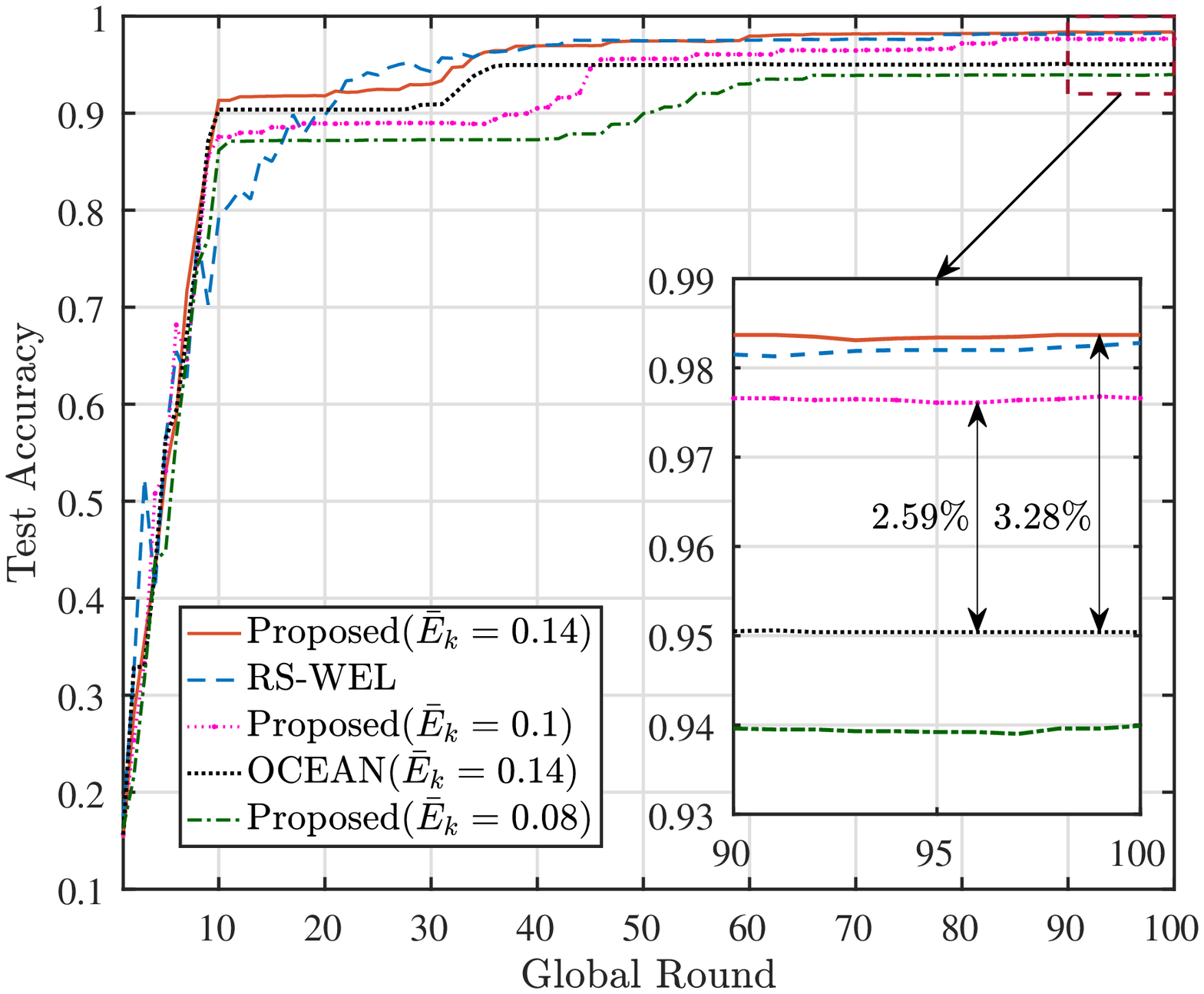}}
\hspace{0.01\linewidth}
\subfigure[]{\label{fig:cifar10_Ek}
\includegraphics[width=0.43\linewidth]{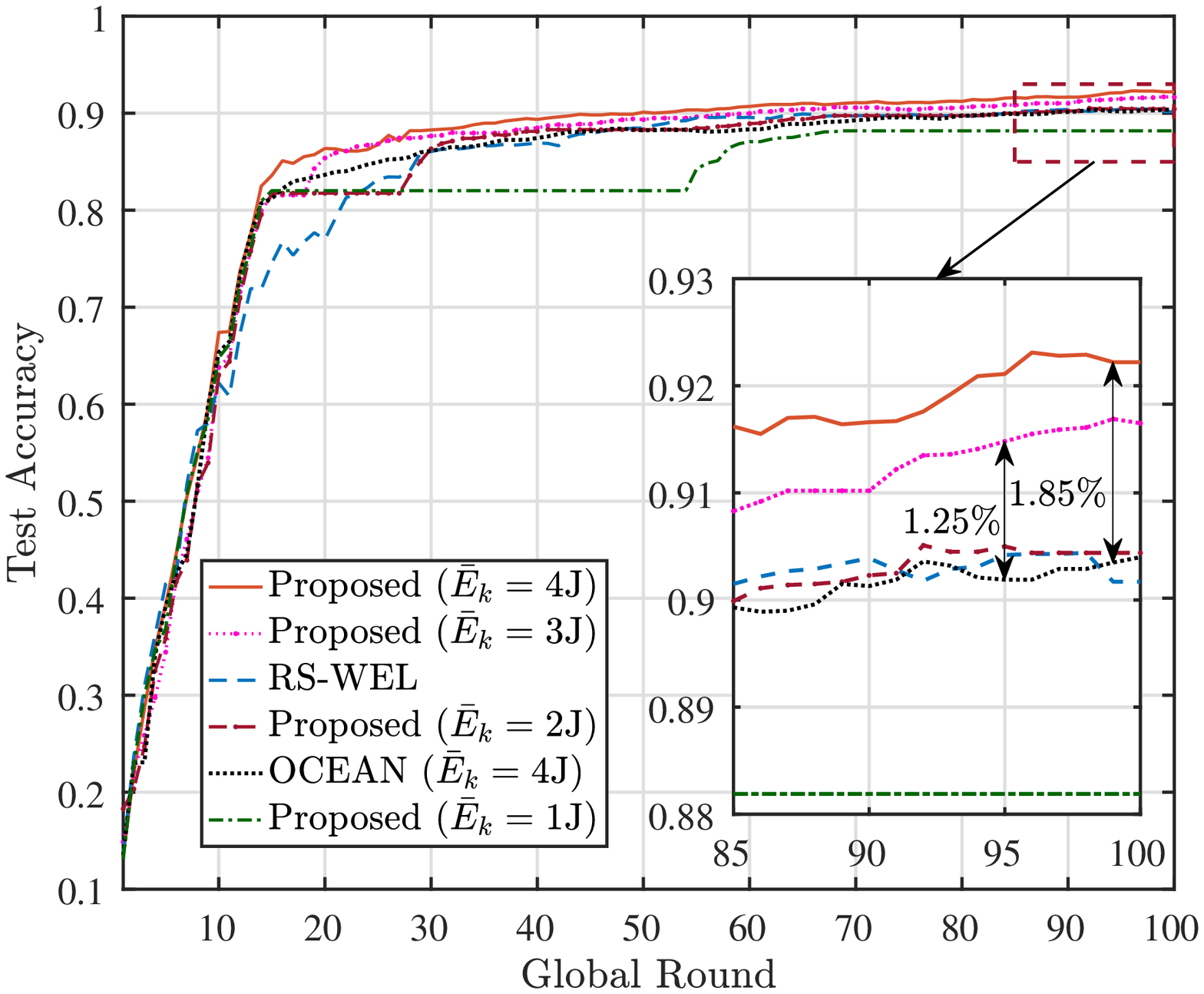}}
\vspace{-0.4cm}
\caption{Performance of the proposed algorithm and benchmarks under different energy budget $\bar{E}_k$: (a) on MNIST dataset; (b) on CIFAR-10 dataset.}
\label{fig:acc_Ek}
\vspace{-0.7cm}
\end{figure*}

A similar evaluation is made on the CIFAR-10 dataset in Fig. \ref{fig:cifar10_Ek}. Given energy budget $\bar{E}_k=4$J for both the proposed algorithm and the OCEAN algorithm, the proposed algorithm achieves around a 1.85\% accuracy boosts for the OCEAN algorithm. Similarly, the proposed algorithm under 75\% energy budget  ($\bar{E}_k=3$J) outperforms the OCEAN algorithm with an energy budget $\bar{E}_k=4$J, obtaining 1.25\% accuracy gain.
Additionally, the proposed algorithm with $\bar{E}_k=2$J obtains a similar performance as the OCEAN algorithm with $\bar{E}_k=4$J and the RS-WEL scheme.
The performance gain mainly comes from the joint optimization for both computation and wireless resources. In our proposed algorithm, the participating devices can get a trade-off between computation and communication energy consumption, achieving the most energy-efficient learning process. Specifically, the devices with poor channel conditions can boost their CPU frequency for reducing computation time and thus reserve more time for wireless communications. In contrast, devices with good channel conditions can lower the CPU frequency to balance computation and communication energy consumption.

\begin{figure*}
\centering
\subfigure[]{\label{fig:mnist_Tmax}
\includegraphics[width=0.43\linewidth]{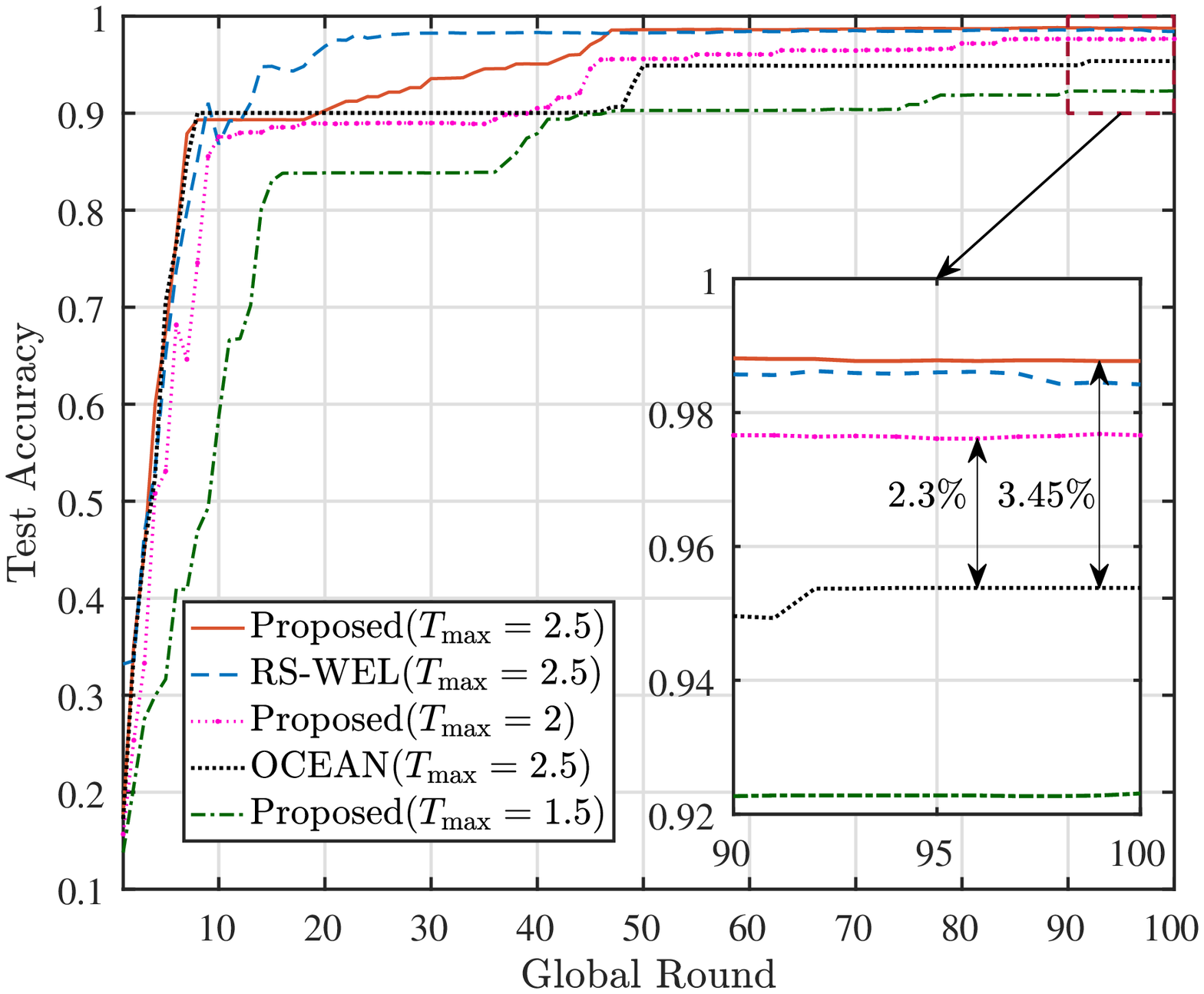}}
\hspace{0.01\linewidth}
\subfigure[]{\label{fig:cifar10_Tmax}
\includegraphics[width=0.43\linewidth]{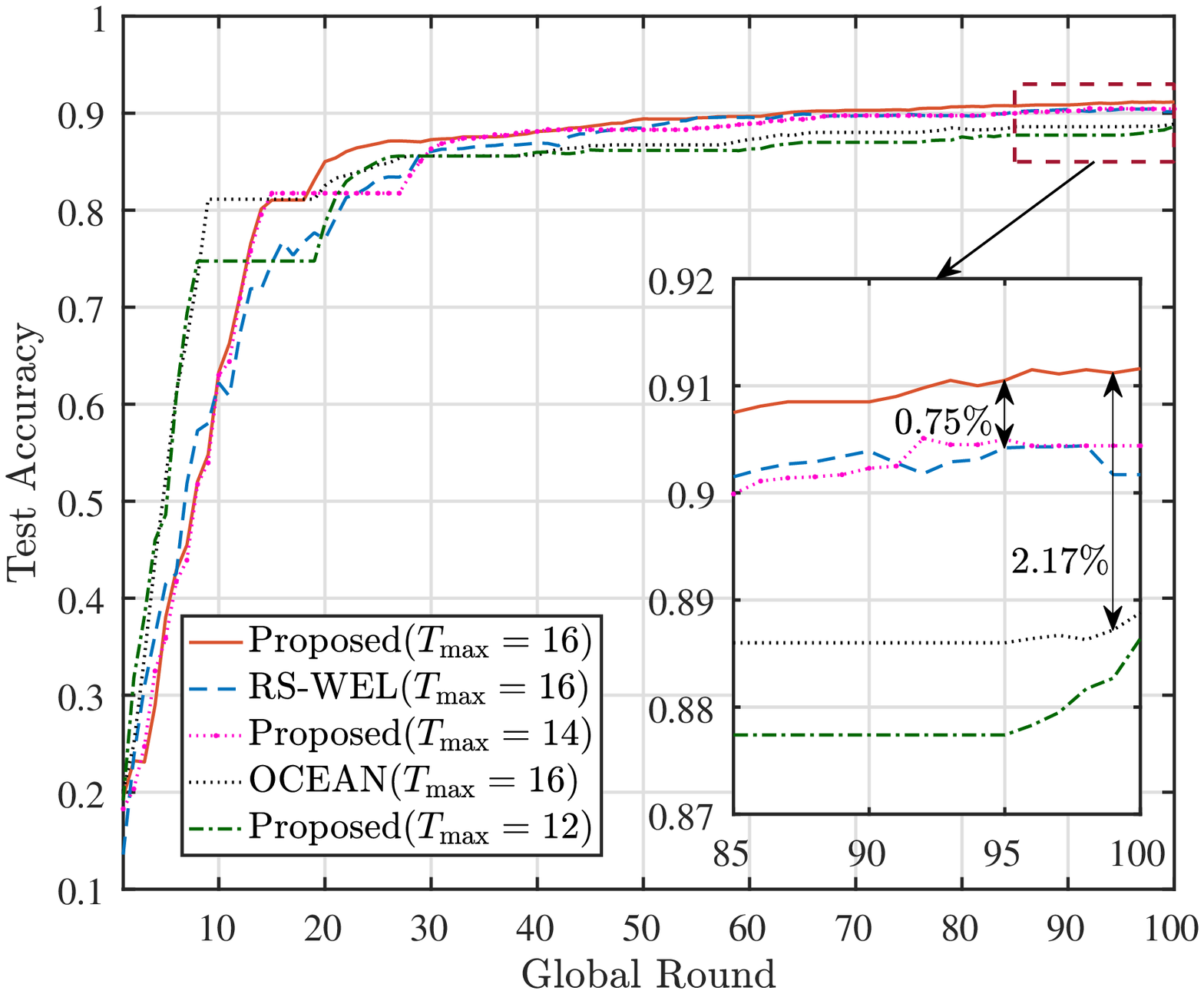}}
\vspace{-0.4cm}
\caption{Performance of the proposed algorithm and benchmarks under different delay constraint $T_{\max}$: (a) on MNIST dataset; (b) on CIFAR-10 dataset.}
\label{fig:acc_Tmax}
\vspace{-0.8cm}
\end{figure*}
We compare our proposed device scheduling algorithm with the benchmarks under different latency constraints on MNIST dataset in Fig. \ref{fig:mnist_Tmax}. Clearly, as the latency constraint, $T_{\max}$, increases, the learning performance is improved. This is because a larger $T_{\max}$ helps save the computation and communication energy and thus more data samples are able to scheduled in each round.
Moreover, with the same latency constraints, i.e., $T_{\max}=2.5$s, the proposed algorithm boosts 3.45\% test accuracy compared with the OCEAN algorithm. Using the RS-WEL as the baseline, the proposed algorithm obtains a minor accuracy gain. One interesting phenomenon is that the proposed algorithm outperforms the OCEAN algorithm with a stricter delay restriction. Specifically, given time budget $T_{\max}=2$s for the proposed algorithm, it obtains 2.3\% accuracy gain than the OCEAN algorithm with $T_{\max}=2.5$s. In other words, the proposed algorithm is able to obtain a better accuracy with a 20\% time budget reduction.
Although the proposed algorithm with $T_{\max}=1.5$s performs not good as the OCEAN algorithm with $T_{\max}=2.5$s, the above results illustrate it has the ability to improve accuracy with a stringent delay.

Fig. \ref{fig:cifar10_Tmax} shows the impact of time budget on CIFAR-10 dataset, obtaining a similar results on the MNIST dataset. Specifically, the proposed algorithm boosts 2.17\% test accuracy with the OCEAN algorithm under same delay restriction $T_{\max}=16$s. Compared with the RS-WEL scheme, the proposed algorithm gains 0.75\% performance improvement with $T_{\max}=16$s, and obtains a similar performance with $T_{\max}=14$s.
Moreover, under a stringent delay requirement, i.e., $T_{\max}=14$s, the proposed algorithm achieves a better performance than the OCEAN algorithm with $T_{\max}=16$s. That is, the proposed algorithm is able to get a better performance as the OCEAN algorithm with 12.5\% time budget reduction.
The underlying reason is that the joint optimization of computation and communication achieves lower energy consumption than solely considering the optimal communication. Even with less time budget, the balance between computation and communication can also lower the overall energy consumption, enabling more devices to participate in the FL training process in a sustainable way.

\begin{figure*}
\centering
\subfigure[]{\label{fig:mnist_cifar_V_acc}
\includegraphics[width=0.43\linewidth]{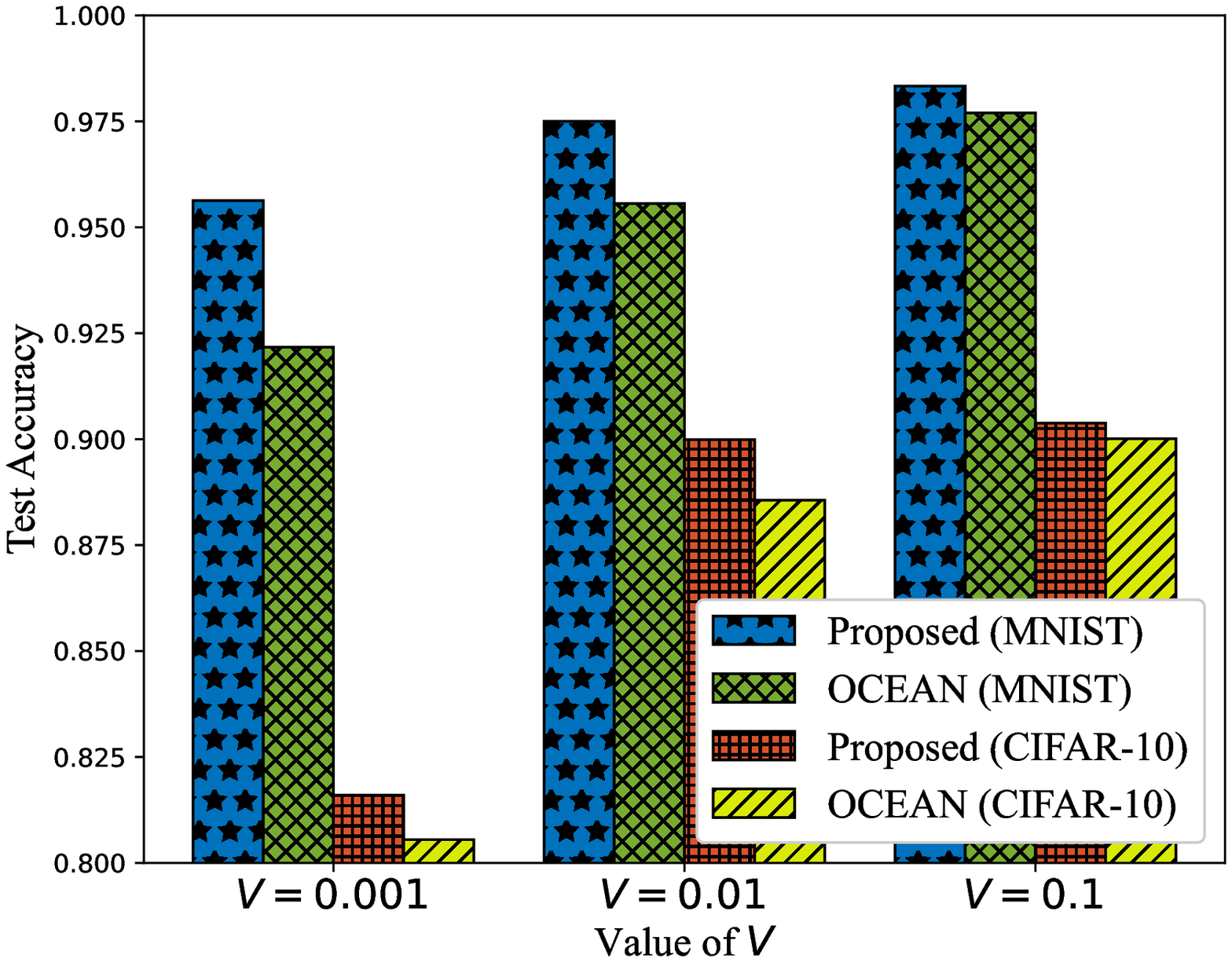}}
\hspace{0.01\linewidth}
\subfigure[]{\label{fig:mnist_cifar_ee}
\includegraphics[width=0.43\linewidth]{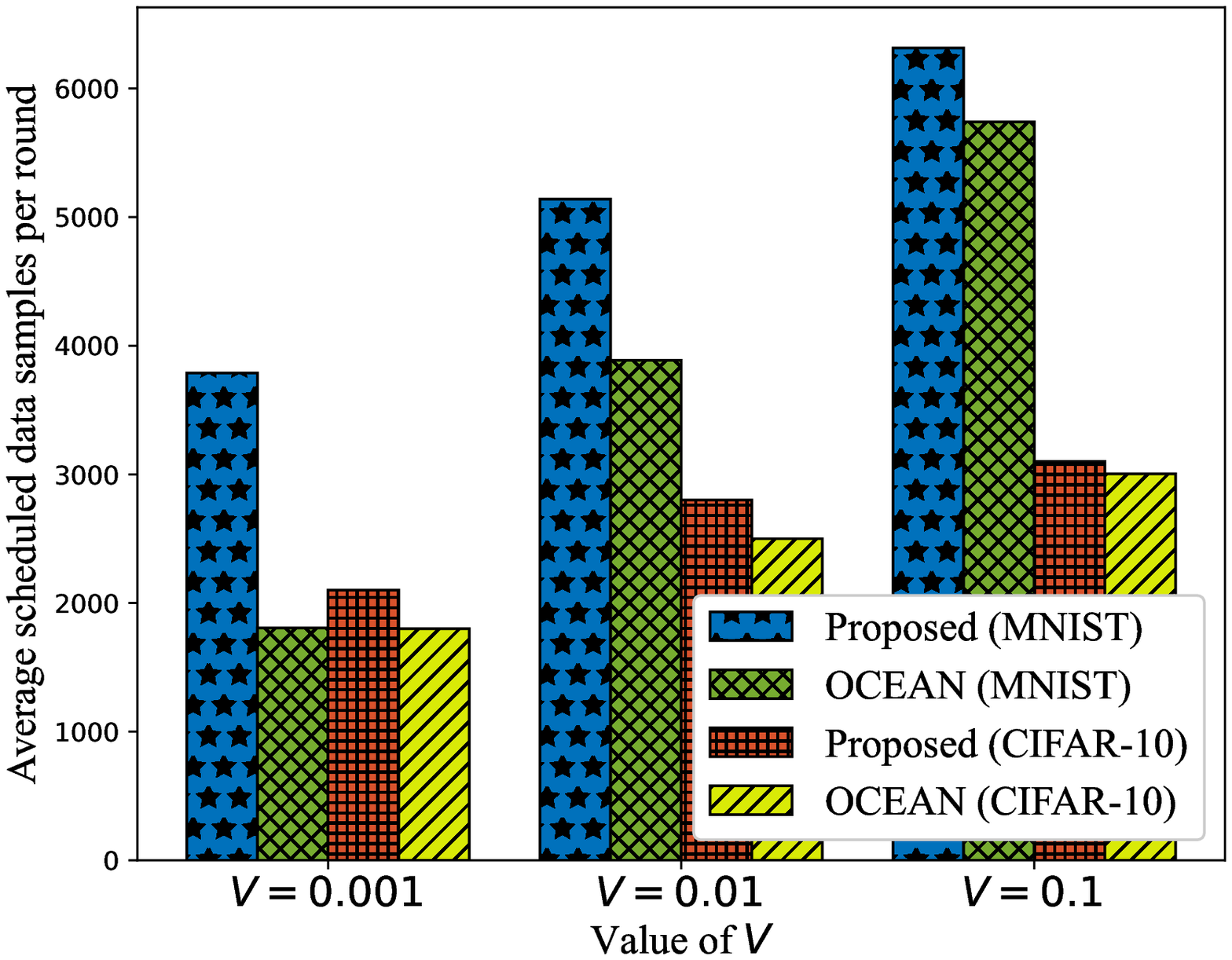}}
\vspace{-0.4cm}
\caption{Performance of the proposed algorithm and benchmarks under different weight parameter $V$: (a) test accuracy on MNIST and CIFAR-10 datasets; (b) average scheduled data samples per round.}
\label{fig:acc_V}
\vspace{-0.9cm}
\end{figure*}
In Fig. \ref{fig:acc_V}, we verify that the adjustable weight parameter $V$ is able to balance the training performance and energy consumption of devices.
Fig. \ref{fig:mnist_cifar_ee} shows that as $V$ increases, devices consume energy in a more aggressive manner, resulting in scheduling more data samples, thus obtaining accuracy improvement.
From Fig. \ref{fig:mnist_cifar_V_acc}, the experiments on the MNIST dataset indicate that the proposed algorithm achieves 3.46\%, 1.94\%, and 0.63\% test accuracy improvement compared with the OCEAN algorithm under $V=0.001$, $V=0.01$, and $V=0.1$, in each one respectively.
Interestingly, the proposed algorithm with $V=0.001$ obtains a similar performance with the OCEAN algorithm with $V=0.01$. This further reveals that the proposed algorithm has the ability to obtain a similar performance as the OCEAN algorithm under a more rigid energy restriction.
Similarly, on the CIFAR-10 dataset, the proposed algorithm boosts 1.05\% and 1.23\% accuracy in terms of $V=0.001$ and $V=0.01$, and obtains a slight accuracy improvement when $V=0.1$ compared with the OCEAN algorithm.
Note that, if $V$ is too large, the device scheduling algorithm would pay less attention for devices' energy consumption and try to schedule more devices. This may break the energy budget limitation for devices. Thus, the value of $V$ should be judiciously adjusted to optimize the training performance while satisfying the long-term energy constraints.
\vspace{-0.4cm}
\section{Conclusion}\label{sec:conclusion}
In this work, we have proposed a novel PMA-FL algorithm, which only shares the feature extractor part of neural networks for global aggregation in the learning process while the predictor part of each device is localized for personalization.
This design effectively improves the robustness and performance of the training process, overcoming the data heterogeneity across devices. Experiments show that PMA-FL is able to boost 3.13\% and 11.8\% accuracy on MNIST and CIFAR-10 datasets compared to the benchmark approaches, respectively.
In addition, we have theoretically analyzed the convergence bound of PMA-FL with a general non-convex loss function setting.
To implement the PMA-FL in resource-limited wireless networks, we have devised a joint device scheduling, communication and computation resource allocation approach to improve the learning performance by achieving the energy consumption balance between communication and computation for each device and the energy consumption-bandwidth balance between devices.
Compared with the considered benchmarks with the same energy and time budgets, PMA-FL obtained around 3\% and 2\% accuracy improvement on the MNIST and CIFAR-10 datasets, respectively. Moreover, PMA-FL is able to obtain slightly higher accuracy than the benchmarks with 29\% energy or 20\% time reduction on the MNIST; and 25\% energy or 12.5\% time reduction on the CIFAR-10.

\scriptsize
\bibliographystyle{IEEEtran}
\bibliography{IEEEabrv,cited}

\begin{thebibliography}{10}
\providecommand{\url}[1]{#1}
\csname url@rmstyle\endcsname
\providecommand{\newblock}{\relax}
\providecommand{\bibinfo}[2]{#2}
\providecommand\BIBentrySTDinterwordspacing{\spaceskip=0pt\relax}
\providecommand\BIBentryALTinterwordstretchfactor{4}
\providecommand\BIBentryALTinterwordspacing{\spaceskip=\fontdimen2\font plus
\BIBentryALTinterwordstretchfactor\fontdimen3\font minus
  \fontdimen4\font\relax}
\providecommand\BIBforeignlanguage[2]{{%
\expandafter\ifx\csname l@#1\endcsname\relax
\typeout{** WARNING: IEEEtran.bst: No hyphenation pattern has been}%
\typeout{** loaded for the language `#1'. Using the pattern for}%
\typeout{** the default language instead.}%
\else
\language=\csname l@#1\endcsname
\fi
#2}}

\bibitem{chen2022is}
Z.~Chen, W.~Yi, A.~Nallanathan, and G.~Y. Li, ``Is partial model aggregation
  energy-efficient for federated learning enabled wireless networks?'' in
  \emph{Proc. IEEE Int. Conf. Commun. (ICC)}, Jun. 2023.

\bibitem{10024766}
W.~Xu, Z.~Yang, D.~W.~K. Ng, M.~Levorato, Y.~C. Eldar, and M.~Debbah, ``Edge
  learning for {B}5{G} networks with distributed signal processing: Semantic
  communication, edge computing, and wireless sensing,'' \emph{IEEE J. Sel.
  Topics in Signal Processing}, pp. 1--31, 2023.

\bibitem{9716792}
H.~Ye, L.~Liang, and G.~Y. Li, ``Decentralized federated learning with
  unreliable communications,'' \emph{IEEE J. Sel. Topics in Signal Processing},
  pp. 1--1, 2022.

\bibitem{yang2022federated}
Z.~Yang, M.~Chen, K.-K. Wong, H.~V. Poor, and S.~Cui, ``Federated learning for
  6{G}: Applications, challenges, and opportunities,'' \emph{Engineering},
  vol.~8, pp. 33--41, 2022.

\bibitem{9530714}
Z.~Qin, G.~Y. Li, and H.~Ye, ``Federated learning and wireless
  communications,'' \emph{IEEE Wireless Commun.}, vol.~28, no.~5, pp. 134--140,
  2021.

\bibitem{9460016}
L.~U. Khan, W.~Saad, Z.~Han, E.~Hossain, and C.~S. Hong, ``Federated learning
  for internet of things: Recent advances, taxonomy, and open challenges,''
  \emph{IEEE Commun. Surveys Tuts.}, vol.~23, no.~3, pp. 1759--1799, 2021.

\bibitem{9461628}
Q.~Zeng, Y.~Du, K.~Huang, and K.~K. Leung, ``Energy-efficient resource
  management for federated edge learning with {CPU}-{GPU} heterogeneous
  computing,'' \emph{IEEE Trans. Wireless Commun.}, vol.~20, no.~12, pp.
  7947--7962, 2021.

\bibitem{9579038}
B.~Luo, X.~Li, S.~Wang, J.~Huang, and L.~Tassiulas, ``Cost-effective federated
  learning in mobile edge networks,'' \emph{IEEE J. Sel. Areas Commun.},
  vol.~39, no.~12, pp. 3606--3621, 2021.

\bibitem{9609568}
K.~Guo, Z.~Chen, H.~H. Yang, and T.~Q.~S. Quek, ``Dynamic scheduling for
  heterogeneous federated learning in private 5g edge networks,'' \emph{IEEE J.
  Sel. Topics in Signal Processing}, vol.~16, no.~1, pp. 26--40, 2022.

\bibitem{9264742}
Z.~Yang, M.~Chen, W.~Saad, C.~S. Hong, and M.~Shikh-Bahaei, ``Energy efficient
  federated learning over wireless communication networks,'' \emph{IEEE Trans.
  Wireless Commun.}, vol.~20, no.~3, pp. 1935--1949, 2021.

\bibitem{9207871}
W.~Shi, S.~Zhou, Z.~Niu, M.~Jiang, and L.~Geng, ``Joint device scheduling and
  resource allocation for latency constrained wireless federated learning,''
  \emph{IEEE Trans. Wireless Commun.}, vol.~20, no.~1, pp. 453--467, 2021.

\bibitem{9237168}
J.~Xu and H.~Wang, ``Client selection and bandwidth allocation in wireless
  federated learning networks: A long-term perspective,'' \emph{IEEE Trans.
  Wireless Commun.}, vol.~20, no.~2, pp. 1188--1200, 2021.

\bibitem{9605599}
Y.~Sun, S.~Zhou, Z.~Niu, and D.~G\"{u}nd\"{u}z, ``Dynamic scheduling for
  over-the-air federated edge learning with energy constraints,'' \emph{IEEE J.
  Sel. Areas Commun.}, vol.~40, no.~1, pp. 227--242, 2022.

\bibitem{9210812}
M.~Chen, Z.~Yang, W.~Saad, C.~Yin, H.~V. Poor, and S.~Cui, ``A joint learning
  and communications framework for federated learning over wireless networks,''
  \emph{IEEE Trans. Wireless Commun.}, vol.~20, no.~1, pp. 269--283, 2021.

\bibitem{zhou2021communication}
S.~Zhou and G.~Y. Li, ``Federated learning via inexact {ADMM},'' \emph{IEEE
  Trans. Pattern Anal. Mach. Intell.}, pp. 1--10, 2023.

\bibitem{NIPS2017_6211080f}
V.~Smith, C.-K. Chiang, M.~Sanjabi, and A.~S. Talwalkar, ``Federated multi-task
  learning,'' in \emph{Proc. Neural Inf. Process. Sys. (NeurIPS)}, 2017.

\bibitem{9681911}
S.~Yue, J.~Ren, J.~Xin, D.~Zhang, Y.~Zhang, and W.~Zhuang, ``Efficient
  federated meta-learning over multi-access wireless networks,'' \emph{IEEE J.
  Sel. Areas Commun.}, vol.~40, no.~5, pp. 1556--1570, 2022.

\bibitem{9187874}
V.-D. Nguyen, S.~K. Sharma, T.~X. Vu, S.~Chatzinotas, and B.~Ottersten,
  ``Efficient federated learning algorithm for resource allocation in wireless
  iot networks,'' \emph{IEEE Internet Things J.}, vol.~8, no.~5, pp.
  3394--3409, 2021.

\bibitem{pmlr-v54-mcmahan17a}
B.~McMahan, E.~Moore, D.~Ramage, S.~Hampson, and B.~A.~y. Arcas,
  ``{Communication-Efficient Learning of Deep Networks from Decentralized
  Data},'' in \emph{Proc. Artificial Intelligence and Statistics (AISTATS)},
  20--22, Apr. 2017.

\bibitem{bengio2013representation}
Y.~Bengio, A.~Courville, and P.~Vincent, ``Representation learning: A review
  and new perspectives,'' \emph{IEEE Trans. Pattern Anal. Mach. Intell.},
  vol.~35, no.~8, pp. 1798--1828, 2013.

\bibitem{lecun2015deep}
Y.~LeCun, Y.~Bengio, and G.~Hinton, ``Deep learning,'' \emph{nature}, vol. 521,
  no. 7553, pp. 436--444, 2015.

\bibitem{pmlr-v139-collins21a}
L.~Collins, H.~Hassani, A.~Mokhtari, and S.~Shakkottai, ``Exploiting shared
  representations for personalized federated learning,'' in \emph{Proc. Int.
  Conf. Mach. Learning (ICML)}, 18--24 Jul 2021.

\bibitem{goodfellow2016deep}
I.~Goodfellow, Y.~Bengio, and A.~Courville, \emph{Deep learning}.\hskip 1em
  plus 0.5em minus 0.4em\relax MIT press, 2016.

\bibitem{miettinen2010energy}
A.~P. Miettinen and J.~K. Nurminen, ``Energy efficiency of mobile clients in
  cloud computing.'' \emph{HotCloud}, vol.~10, pp. 1--7, 2010.

\bibitem{7762913}
C.~You, K.~Huang, H.~Chae, and B.-H. Kim, ``Energy-efficient resource
  allocation for mobile-edge computation offloading,'' \emph{IEEE Trans.
  Wireless Commun.}, vol.~16, no.~3, pp. 1397--1411, 2017.

\bibitem{abbasnejad2018deep}
\BIBentryALTinterwordspacing
E.~Abbasnejad, J.~Shi, and A.~van~den Hengel, ``Deep {L}ipschitz networks and
  dudley {GAN}s,'' 2018. [Online]. Available:
  \url{https://openreview.net/forum?id=rkw-jlb0W}
\BIBentrySTDinterwordspacing

\bibitem{sifaou2021robust}
H.~Sifaou and G.~Y. Li, ``Robust federated learning via over-the-air
  computation,'' in \emph{Proc. IEEE Int. Workshop on Machine Learning for
  Signal Process. (MLSP)}, 2022, pp. 1--6.

\bibitem{neely2010stochastic}
M.~J. Neely, ``Stochastic network optimization with application to
  communication and queueing systems,'' \emph{Synthesis Lectures on
  Communication Networks}, vol.~3, no.~1, pp. 1--211, 2010.

\bibitem{boyd2004convex}
S.~Boyd, S.~P. Boyd, and L.~Vandenberghe, \emph{Convex optimization}.\hskip 1em
  plus 0.5em minus 0.4em\relax Cambridge university press, 2004.

\bibitem{waldspurger2015phase}
I.~Waldspurger, A.~d’Aspremont, and S.~Mallat, ``Phase recovery, maxcut and
  complex semidefinite programming,'' \emph{Mathematical Programming}, vol.
  149, no.~1, pp. 47--81, 2015.

\bibitem{9862981}
Z.~Chen, W.~Yi, A.~S. Alam, and A.~Nallanathan, ``Dynamic task software
  caching-assisted computation offloading for multi-access edge computing,''
  \emph{IEEE Trans. Commun.}, vol.~70, no.~10, pp. 6950--6965, 2022.

\bibitem{li2019convergence}
X.~Li, K.~Huang, W.~Yang, S.~Wang, and Z.~Zhang, ``On the convergence of fedavg
  on non-iid data,'' in \emph{Proc. Int. Conf. Learning Repr. (ICLR)}, 2020.

\end{thebibliography}
\end{document}